\newtheorem{theorem}{Theorem}
\newtheorem{lemma}{Lemma}
\newtheorem{corollary}{Corollary}
\newcommand{\bcomment}[1]{}
\newcommand{\TODO}[1]{}
\begin{document}
\setcopyright{acmcopyright}


\conferenceinfo{KDD '16}{August 13--17, 2016, San Francisco, CA, USA}

\acmPrice{\$15.00}

%

\title{On the Simultaneous Preservation of Privacy and Community Structure in Anonymized Networks}

\numberofauthors{4}

\author{
\alignauthor
Daniel Cullina\\
\affaddr{Department of Electrical and Computer Engineering}\\
\affaddr{University of Illinois, Urbana, Illinois 61801}\\
\email{cullina@illinois.edu}
\alignauthor
Kushagra Singhal
\affaddr{Department of Electrical and Computer Engineering}\\
\affaddr{University of Illinois, Urbana, Illinois 61801}\\
\email{ksingha2@illinois.edu}
\alignauthor Negar Kiyavash
\affaddr{Department of Electrical and Computer Engineering}\\
\affaddr{University of Illinois, Urbana, Illinois 61801}\\
\email{kiyavash@illinois.edu}
\and
\alignauthor 
Prateek Mittal\\
\affaddr{Department of Electrical Engineering}\\
\affaddr{Princeton University, Princeton, New Jersey 08544}\\
\email{pmittal@princeton.edu}
}

\bcomment{

\IEEEauthorblockN{
Daniel Cullina\IEEEauthorrefmark{1}, 
Kushagra Singhal\IEEEauthorrefmark{1},
Negar Kiyavash\IEEEauthorrefmark{1},
and Prateek Mittal\IEEEauthorrefmark{2}}

\IEEEauthorblockA{\IEEEauthorrefmark{1}Department of Electrical and Computer Engineering\\
University of Illinois, Urbana, Illinois 61801\\
Email: \{cullina, ksingha2, kiyavash\}@illinois.edu}

\IEEEauthorblockA{\IEEEauthorrefmark{2}Department of Electrical Engineering\\
Princeton University, Princeton, New Jersey 08544\\
Email: pmittal@princeton.edu}
}

\maketitle
\begin{abstract}
\bcomment{The proliferation of online social networks has helped in generating large amounts of graph data which has immense value for data analytics. Network operators, like Facebook, often share this data with researchers or third party organizations, which helps both the entities generate revenues and improve their services. As this data is shared with third party organizations, the concern of user privacy becomes pertinent. Hence, it becomes essential to balance utility and privacy while releasing such data.}

We consider the problem of performing community detection on a network, while maintaining privacy, assuming that the adversary has access to an auxiliary correlated network. We ask the question ``Does there exist a regime where the network cannot be deanonymized perfectly, yet the community structure could be learned?." To answer this question, we derive information theoretic converses for the perfect deanonymization problem using the Stochastic Block Model and edge sub-sampling. We also provide an almost tight achievability result for perfect deanonymization.

We also evaluate the performance of percolation based deanonymization algorithm on Stochastic Block Model data-sets that satisfy the conditions of our converse. 
Although our converse applies to exact deanonymization, the algorithm fails drastically when the conditions of the converse are met. 
Additionally, we study the effect of edge sub-sampling on the community structure of a real world dataset. 
Results show that the dataset falls under the purview of the idea of this paper. 
There results suggest that it may be possible to prove stronger partial deanonymizability converses, which would enable better privacy guarantees.
\end{abstract}

\section{Introduction}
\label{intro}

Data analytics is a rapidly growing field, aided by the availability of huge amounts of data and significant computing power. An enormous part of data generation is a result of the emergence of Online Social Networks (OSNs) such as Facebook, LinkedIn, Twitter etc. The user base of these networks spans in millions and is still growing. These companies and others perform data analytics for the purpose of increasing revenues, reducing customer service costs, better prediction and possibly prevention of attrition rates, getting feedback on and improving public opinion of their products/services. For instance, LinkedIn has been very successful in converting the data collected on their website into new data products, such as their \emph{People You May Know} feature. Network providers also create revenue by sharing data with other third parties who create value by performing analytics on the data. For example, due to homophily \cite{mcpherson2001birds}, OSNs are good microcosms to study efficient advertising strategies. With the prevalence of data analytics, concerns about user privacy are growing too and such concerns could hamper the former if not addressed adequately.

Preprocessing the data prior to its release, with the goal of minimizing the risk of sharing private information of the users, is crucial for addressing privacy concerns. \textit{Anonymization} is an essential step in the data preprocessing. Perhaps, still the most widely used technique is the naive practice of substituting the personal identifiers (e.g., name, IP address, etc) by random identifiers. More clever techniques such as k-anonymization \cite{hay2008resisting,liu2008towards} and differential privacy \cite{dwork2011differential,li2013membership} are also proposed to address the problem in suitable scenarios.

As fundamentally any anonymization technique involves modification of the data at some level, it could possibly deteriorate the utility of the data for the initial analytics tasks it was released for. This trade-off between privacy and data utility has been noted in the literature \cite{brickell2008cost,li2009tradeoff}, but a theoretical understanding of this trade-off is still missing.

\noindent{\bf Contributions.} 
We investigate the feasibility of performing data analytics, without compromising privacy of the users involved, in the following specific setting. 
Let $G_2$ denote a graph whose vertices are the identities of the users (e.g. names, email id, etc.) and its edges encode relationships (e.g. friendship, citations, professional relation etc.) among those users. 
Furthermore, assume that the vertices of $G_2$ are associated with some sensitive information (e.g. sexual orientation, personal preferences, hometown, relationship status, location history, etc.). 
A third party is interested in studying the relationship between the sensitive tags and the structural properties of $G_2$.
To preserve the privacy of the users, a sanitized version of $G_2$ would be released. 
Assume another graph $G_1$, correlated with $G_2$ and defined on the same vertex set, is available to the third party as {\em auxiliary} information.
In $G_1$, vertices are labeled with user identities, but no sensitive vertex tags are present. 
Given the availability of the public graph $G_1$, we ask the question: {\em Can we safely release a sanitized graph without compromising the privacy of the people involved?} 
The challenge lies in the requirement that the sanitized version of $G_2$ should allow for reliably performing analytics, but not allow the third party (referred to as the attacker) to learn the identity of the users (i.e., vertex labels) despite the availability of $G_1$.
In the rest of the paper, we limit our attention to a specific problem, the so-called \emph{community reconstruction}.
We selected this as a proxy for a much broader class of grass-analysis questions because it has structural features in common with many other problems and recent research work has established a detailed understanding of its properties.

Our main contributions are as follows.
We derive information theoretic converses for the anonymous exact community recovery problem for a large class of random graphs. 
That is, we provide a threshold in terms of the problem parameters, which if met, guarantees that no algorithm can deanonymize the graph. 
More specifically:
\begin{itemize}
\item We derive a nearly sharp threshold for exact deanony-mization problem for the class of SBM graphs.
\vspace{-5pt}
\item We characterize partial deanonymity of the system (i.e. the growth rate of number of vertices that cannot be deanonymized) as a function of the correlation between the {\em auxiliary} graph $G_1$ and the {\em sensitive} graph $G_2$ and the sparsity of these two graphs.
\vspace{-5pt}
\item We establish that there is a nonempty parameter space such that $G_2$ cannot be fully deanonymized but exact community recovery is feasible.
\vspace{-5pt}
\item We investigate methods of modifying $G_2$ to strengthen its anonymity while preserving community structure. 
\vspace{-5pt}
\item We study the behavior of the threshold identified by our converse as a function of growth rate of communities with regards to the size of the vertices, $n$. 
\end{itemize}

To the best of our knowledge, this is the first paper to offer a converse: a statement that under certain conditions, any deanonymization algorithm must fail. 
The previous work on the subject only provides {\em achievability} results for the problem, which
describe sufficient conditions on the model parameters under which deanonymization is possible. This is done by proving success of specific algorithms in deanonymizing the graph for a range of problem parameters or by providing simulation results on specific datasets \cite{narayanan2009anonymizing, nilizadeh2014community,pedarsani2011privacy,ji2014structural,ji2015your}. 
Instead, we seek {\em converses} that guarantee no algorithm is able to deanonymize the sanitized graph. As first steps to solving the problem, we study the converse such that no algorithm is able to deanonymize the network \emph{perfectly}.

The rest of the paper is organized as follows. In Section \ref{commreconstruct}, we describe the Stochastic Block Model used in this paper and discuss the community reconstruction problem. We discuss some of the relevant literature in Section \ref{related}. The system model describing the generation of correlated graphs and deanonymization attack is discussed in Section \ref{model}. Necessary conditions for the anonymity in the Stochastic Block Model are derived in Section \ref{necc}. In Section \ref{sec:recoveryregion}, we describe the existence of anonymized community recovery region and discuss approaches to boost the anonymity in networks. We consider the case of growing number of communities in Section \ref{sec:LargeComm}. We evaluate and relate the performance of a particular deanonymization algorithm to our results in Section \ref{sec:eval}. We conclude the paper with some remarks in Section \ref{sec:conclusion}.
\section{The Stochastic Block Model}\label{commreconstruct} 
Communities are an integral part of any social network. The community structure also plays an important role in many data analytics' applications. In Section \ref{related_comm}, we shed light on some applications of community reconstruction to emphasize its importance. In recent years, community detection/reconstruction problems have been extensively studied for the Stochastic Block Model (SBM) \cite{coja2010graph,decelle2011asymptotic,massoulie2014community, hajek2015achieving,abbe2014exact}.
SBM is a simple generalization of the Erd\"{o}s R\'enyi model that incorporates community structure.

The SBM is defined as follows. 
Suppose that $n$ vertices are partitioned into $C$ disjoint subsets, called communities.  
A symmetric $C \times C$ matrix, $P$, specifies edge probabilities: for two vertices $u$ and $v$ in communities $i = \mathcal{C}(i)$ and $ j =\mathcal{C}(j)$, $u$ and $v$ are adjacent with probability $P_{ij}$.
The presence of distinct edges is independent.
A special case of SBM is the {\em planted partition model} in which the entries of the probability matrix $P$ are a constant $p$ on the diagonal and a constant $q$ off the diagonal. Specifically, $P_{ij}=p$ if the nodes $i$ and $j$ are in the same community, else $P_{ij}=q$. It is assumed that $p>q$ as nodes in a community are relatively densely connected. Such a network is denoted by $SBM(n,p,q)$. 

There has been a series of results for the exact community recovery in the planted partition model. These studies assume a sparse regime where $p=a\frac{\log n}{n}$ and $q=b\frac{\log n}{n}$, where $a,b>0$ are some fixed constants.\footnote{Learning the community structure is harder for the sparse regime. Thus, it constitutes the more interesting case. } The case of two communities, $C=2$, was studied in \cite{abbe2014exact} in which Abbe et al. analyze the information-theoretic bounds for exact recovery and establish a phase transition phenomenon for the problem. Additionally, they propose a Semidefinite Programming (SDP) based algorithm for exact recovery of communities. Hajek et al. subsequently prove that the SDP algorithm is optimal, that is, it recovers the exact communities whenever it is theoretically possible to do so \cite{hajek2014achieving}. Hajek et al.  further extended Abbe et al.'s results to an arbitrary fixed $C$.  As this particular result is relevant to our derivations, we state it in Section \ref{sec:recoveryregion} (See Theorem  \ref{theorem:comm}).\\
\noindent{\bf Remarks.} The definition of a community varies with applications and algorithms \cite{girvan2002community,chen2006detecting}. Also the varying definitions result in the theoretical analysis becoming intractable. Although the SBM may not capture the community structure in the real world networks perfectly, it lands itself to tractable analysis. Apart from simplicity, it also captures one of the most important elements of communities, \emph{assortativity}. Hence, we focus on the SBM which has a clear definition of a community and the ground truth is available while evaluating an algorithm. Moreover, our results can be generalized to unequal sized communities which is more practical, but this scenario makes the analysis more involved without providing any new insights into the problem.

\section{Related Work}
\label{related}
In this section, we discuss some of the important deanony-mization attacks followed by some applications of community detection in networks. Due to space limitation we only discuss the most relevant literature.
\subsection{Network De-anonymization} 
In \cite{pedarsani2011privacy}, Pedarsani and Grossglauser studied the deanony-mization problem for two correlated  Erd\"os R\'enyi random graphs. They assume that both the anonymized and auxiliary graphs are sampled from a common underlying Erd\"os R\'enyi random graph which results in structural correlation between the two graphs. They derive sufficient conditions on the model parameters under which the two graphs can be matched exactly. Specifically, they prove that the average degree only needs to grow slightly faster than the logarithm of order of the network to achieve perfect deanonymization. A similar problem was considered by Ji et al. in \cite{ji2014structural}. To generate the correlated graphs, the sampling process as in \cite{pedarsani2011privacy} was used, but the underlying graph is drawn from the configuration model \cite{newman2010networks}. They derive sufficient conditions on the model parameters for the perfect as well as partial deanonymization of networks. 

Ji et al. studied the role seed nodes play in assisting the exact and partial deanonymization process \cite{ji2015your}. They derived achievability thresholds for  for Erd\"os R\'enyi random graphs as well as graphs from arbitrary distribution models. They also evaluated their results on $24$ real world social networks and showed varying degree of vulnerability among the networks to the deanonymization attacks.

Yartseva et al. studied the performance of a specific algorithm, the so called percolation graph matching algorithm, for deanonymizing Erd\"os R\'enyi random graphs \cite{yartseva2013performance}. Starting with some seed nodes, the algorithm incrementally maps remaining pair of nodes using a thresholding criterion. They prove sufficient conditions on model parameters which enable this algorithm to match the networks almost perfectly. A phase transition in the initial seed set size is established.

Narayanan and Shmatikov proposed a two-stage algorithm to deanonymize a network again when the adversary has  access to  an auxiliary network whose user base overlaps partially with that of the anonymized network \cite{narayanan2009anonymizing}. After the seed identification in the first phase, the algorithm propagates this information and identifies further nodes in the second phase. They show that the users who have accounts on both Twitter (anonymized) and Flickr (auxiliary) can be deanonymized with only a $12\%$ error rate.  Nilizadeh et al. enhance the performance of the algorithm in \cite{narayanan2009anonymizing} using the community structure of the network \cite{nilizadeh2014community}. Their attack proceeds as follows. First, communities are detected and mapped in both the anonymized and auxiliary graphs. Subsequently, more seeds are identified within the communities and deanonymization is performed for each pair of communities using already existing algorithm of \cite{narayanan2009anonymizing}. This algorithm is again run on the whole graph in case some nodes are not mapped in the previous steps. The authors show empirically that this algorithm helps in boosting the deanonymization process on a specific graph derived from Twitter. The results in  \cite{narayanan2009anonymizing,nilizadeh2014community}  are great examples of why deanonymization poses a real threat to users, but they do not provide fundamental limits or even insights into when the deanonymization problem is hard or easy.

\subsection{Community Detection Applications}
\label{related_comm}
\subsubsection{Privacy Control \cite{jones2010feasibility}}
Information sharing  in an OSN, like photos, statuses, emotions and location, is a common practice for individuals using the network. An individual's social contacts may fall into various categories like family, friends, colleagues and even finer subgroups. However, users may want to share their information among a particular group of contacts only. Hence, it becomes important that users are able to selectively share their information over such networks. 

Recently, community reconstruction has been suggested as a tool to solve this problem. Community is a group of tightly connected users where the tightness may be measured by various metrics for different types of groups. For example, a close friendship community is characterized by frequent message sharing. These metrics may be exploited to automatically detect and label communities which enable users to selectively share their information.

\subsubsection{Sampling in OSNs \cite{yoon2015community}}
The popularity of OSNs has grown beyond imagination in the last few years and the size of these networks has grown into millions of users. The data generated by these networks is tremendous and very useful for research and other purposes. As the networks become large, it becomes difficult to analyze the properties of an entire network. 

The sampled graphs should be representative of the original graph is terms of both  the local and global properties like degree distributions, node-edge ratio etc. 
Community reconstruction plays a crucial role to create such representative samples of the original graph. 
In a typical application, the hierarchical community structure of anetwork is reconstructed.
Sampling is done based on the observed communities, ensuring that local properties are preserved in the sampled version. 
Then, in a bottom-up fashion, these sampled subgraphs are linked together to form a bigger graph. 
\subsubsection{Viral Meme Prediction \cite{weng2014predicting}} 
A meme is ``an idea, or style that spreads from person to person within a culture ". They are similar to infectious diseases within a network. Out of many memes generated each day only a few of them go viral within a network. This viral behavior is of value to advertising and marketing businesses. 

There are many factors which contribute to the popularity of memes such as timing, point of beginning and others. Recently, the underlying network structure, specifically community structure, has also been identified as an important feature. Community structure exhibits two important phenomena: social reinforcement and homophily. These features expose community members to a meme more often. This results in higher rates of adoption for a meme. Hence a meme may become more popular within a community with strong social reinforcement and homophily. The features like number of initially infected communities, distribution of infected users across communities, and intra community interaction strength are used to predict the viral memes. 



\section{System Model}\label{model}
In this section, we discuss the system model, followed by the description of the deanonymization attack and derivation of the Maximum a Posteriori estimator. First, we discuss a few preliminaries.
\subsection{Preliminaries}
For a graph $G$, let $V(G)$ and $E(G)$ be the node and edge sets respectively. 
Let $[n]$ denote the set $\{1,\cdots,n\}$.
All of the $n$-vertex graphs that we consider will have vertex set $[n]$.
These numerical vertex labels should not be confused with the vertex labels (or alternatively tags) coming from the problem domain.
Examples of these tags include names and private information.
To anonymize the graph $G_2$, we need to remove the relationship between the numerical vertex labels and the user identities.
To do this, we will apply a uniform random permutation to the numerical vertex labels.
We will always think of a permutation as a function defined from $[n] \to [n]$.
We denote the collection of all two element subsets of $[n]$ by $\binom{[n]}{2}$.
The edge set of a graph $G$ is $E(G) \subseteq \binom{[n]}{2}$.
The community label of a node $i \in V(G)$ is denoted by $\mathcal{C}(i)\in [C]$, where $C$ is the number of communities.
\subsection{Generative Model for Correlated Graphs}
Recall the description of the problem in Section~\ref{intro}. 
There are two $n$-vertex graphs, $G_1$ and $G_2$.
An attacker has access to a pair of correlated graphs $G_1$ and $\pi(G_2)$, both  graphs defined on the same vertex set, denoting the identities of the users. 
Here $\pi$ is a uniformly random permuation of $[n]$.
The vertices of the {\em auxiliary} graph $G_1$, available to the attacker, are tagged with user identities but not any sensitive information.
The vertices of the {\em sensitive} graph $G_2$, not available to the attacker, are tagged with sensitive information.
The {\em anonymized} graph $\pi(G_2)$ is available to the attacker, but the numerical vertex labels contain no information about user identities because $\pi$ is a uniformly random permutation.
Vertex $i$ in $G_1$ and vertex $i$ in $G_2$ correspond to the same user, so given $G_1$, the numerical vertex labels in $G_2$ reveal the user identities.


To generate two correlated graphs, $G_1$ and $G_2$, the following mechanism is used. 
This is essentially the same model which was previously used in \cite{pedarsani2011privacy,ji2014structural,ji2015your}.
The two correlated graphs are assumed to be sampled from a random underlying graph $G$ on the same set of vertices. 
Specifically, $G$ is distributed as $SBM(n,p,q)$ with $C$ equally sized communities. 
Each edge $\{i,j\}\in E(G)$ is included in $G_1$ independently with probability $s_1$. 
The graph $G_2$ is created similarly using sampling probability $s_2$ and these choices are independent of all choices made to create $G_1$. 
As a result, $G_1$ is $SBM(n,ps_1,qs_1)$, $G_2$ is $SBM(n,ps_2,qs_2)$, and $E(G_1)$ and $E(G_2)$ are correlated but in general not equal. 

\subsection{Attack Model}\label{sec:attackmodel}
Recall that, $G_1$ is the \textit{auxiliary} graph and $G_2$ the \textit{sensitive} graph. 
An adversary aims to deanonymize $\pi(G_2)$ using $G_1$. 
A deanonymization attack can be described as a mapping from the nodes of $G_1$ to the nodes of $\pi(G_2)$, i.e. a map $\hat{\pi}: [n] \to [n]$. 
A successful deanonymization attack is the true mapping $\hat{\pi} = \pi$.
In that case, we say that the network $G_2$ is deanonymized exactly. 

When true community labels exist in the graphs, we assume that the adversary knows the true labels of all vertices in both graphs.
\footnote{As we are interested in converses, considering a stronger adversary does not pose a problem. 
In fact, given the anonymized graph $\pi(G_2)$, the adversary must be able to perform community detection with high probability (perform the intended data analytics)
.
}
We say that a permutation preserves the community structure if it maps vertices only to other vertices with the same community label.
That is, a permutation $\pi$ is community preserving if $\forall$ $i \in [n]$, $\mathcal{C}(\pi(i)) = \mathcal{C}(i)$.
Because the adversary can recover the community labels in both graphs, or equivalently can compute both $\mathcal{C}(i)$ and $\mathcal{C}(\pi(i))$, they can learn some information about the permutation $\pi$.
The adversary can group the vertices of $\pi(G_2)$ by community, producing another graph $\pi'(G_2)$ such that $\pi'$ preserves communities.
In other words, anonymizing $G_2$ using a uniformly random $\pi$ does not create additional uncertainty for the adversary beyond what would be created by a permutation that preserves the community structure.
So our analysis considers only the latter type of permutation.

\label{sec:MAP}
An adversary is presented with a statistical estimation problem.
By definition, the Maximum a Posteriori (MAP) estimator minimizes the adversary's probability of error. 
So, if the MAP estimator does not recover the true permutation with high probability, then no other estimator can succeed.
We also assume that all the permutations used to anonymize $G_2$ are equiprobable. 
Hence, the MAP estimator is same as the Maximum Likelihood estimator.

If we fix any randomized estimation procedure, then the adversaries estimate $\hat{\pi}$ become a random variable.
It will be more convenient to let $\Phi = \hat{\pi} \circ \pi^{-1}$ and work with the random permutation $\Phi$ rather than $\hat{\pi}$ directly.
In a successful attack, $\Phi = I$, the identity permutation.
The reason that $\Phi$ is more convenient is that $\Phi$ is independent of $\pi$.
For fixed $G_1$ and $G_2$, any change in $\pi$ results in a corresponding change in $\hat{\pi}$ and this does not change $\Phi$.

The MAP estimator for this problem can be derived as follows. 
We need to compute the likelihood of the posterior probability of a mapping $\Phi$, given the observed graphs $G_1$ and $ \pi(G_2)$, that is, $P[\Phi = \phi | G_1, \pi(G_2)]$.
Note that a particular mapping $\phi : [n] \to [n]$ induces a mapping $\sigma_{\phi}: \binom{[n]}{2} \to \binom{[n]}{2}$ on the node pairs.
Define 
\begin{equation}\label{eq:symmedge
diff}\mathcal{S}_{\phi}=(E(G_1) \cup \sigma_{\phi}(E(G_2))) \setminus (E(G_1) \cap \sigma_{\phi}(E(G_2))),
\end{equation}
 which is the symmetric edge difference of the two edge sets. 
Also for any graph with  community labels, define the following two sets,
\begin{align}
\mathcal{E}^{in}_{\phi}=&\{(i,j)\in\mathcal{S}_{\phi}: \mathcal{C}(i)=\mathcal{C}(j)\}\label{intra}\\
\mathcal{E}^{out}_{\phi}=&\{(i,j)\in\mathcal{S}_{\phi}: \mathcal{C}(i)\neq\mathcal{C}(j)\},\label{inter}
\end{align}
the symmetric edge difference sets corresponding to the intra and inter community edges respectively. 
 
For SBM graphs defined in Section \ref{commreconstruct}, an easy computation shows that the posterior probability $P[\Phi = \phi | G_1, \pi(G_2)] $ is proportional to $c_1^{|\mathcal{E}_{\phi}^{in}|}c_2^{|\mathcal{E}_{\phi}^{out}|}$, where
\begin{align*}
c_1 =& \frac{p(1-s_1)(1-s_2)}{1-p+p(1-s_1)(1-s_2)}\\ 
c_2 =& \frac{q(1-s_1)(1-s_2)}{1-q+q(1-s_1)(1-s_2)}
\end{align*}
Note that $c_1,c_2\leq 1$. Then the MAP estimator is given by
\begin{equation*}
\arg\min_\phi \log\left(\frac{1}{c_1}\right)|\mathcal{E}_{\phi}^{in}|+\log\left(\frac{1}{c_2}\right)|\mathcal{E}_{\phi}^{out}|,
\end{equation*}
the mapping which minimizes a linear combination of $|\mathcal{E}_{in}|$ and $|\mathcal{E}_{out}|$ weighted by fixed positive coefficients.

Results in the next section find conditions under which the MAP estimator fails with high probability.

\section{Conditions for Anonymity}\label{necc}
We analyze the anonymity of the graph $\pi(G_2)$ with the SBM community structure for the attack model described in Section \ref{sec:attackmodel}. Recall that the attacker has access to a correlated graph $G_1$ with known vertex labels. In this section we consider the problem for arbitrary fixed number of communities $C$. We generalize the result in Section \ref{sec:LargeComm} to study the impact of growing number of communities.
The following two lemmas are useful in proving the main result.
To avoid making the proof too technical and for ease of presentation, we present this result for the case in which the community sizes are equal. 
\begin{lemma}
\label{lemma:expected}
Let $p = \frac{a \log n}{n}$ and let $q = \frac{b \log n}{n}$.
Let $G\sim SBM(n,p,q)$ with $C$ equally sized communities. 
Let $X_k$ be the number of isolated vertices in community $k$ of $G$.
If $\frac{a+(C-1)b}{C} < \alpha$, then $E[X_k] \geq \frac{n^{1 - \alpha}}{C}(1 - o(1))$.
Additionally, $\mathbb{P}\left[X_k \leq \frac{n^{1-\alpha}}{2C}\right] \to 0$.
\end{lemma}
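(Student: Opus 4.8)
The plan is to combine the first and second moment methods. Write $\beta = \frac{a + (C-1)b}{C}$, so the hypothesis reads $\beta < \alpha$.

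For the expectation, I would first compute, for a fixed vertex $v$ in community $k$, the probability that $v$ is isolated in $G$. Community $k$ contains $n/C$ vertices, so $v$ has $n/C - 1$ potential neighbours inside its own community, each present independently with probability $p$, and $n - n/C$ potential neighbours outside, each present independently with probability $q$; hence $\mathbb{P}[v \text{ isolated}] = (1-p)^{n/C - 1}(1-q)^{n - n/C}$. Taking logarithms, using $\log(1-x) = -x + O(x^2)$, and noting that with $p = \frac{a\log n}{n}$ and $q = \frac{b\log n}{n}$ the quadratic corrections satisfy $n\cdot O(p^2) = O(\tfrac{\log^2 n}{n}) = o(1)$ (and likewise for $q$), the error terms collapse and $\mathbb{P}[v\text{ isolated}] = n^{-\beta}(1+o(1))$. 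Summing over the $n/C$ vertices of community $k$ by linearity of expectation gives $E[X_k] = \frac{n^{1-\beta}}{C}(1+o(1))$. Since $\beta < \alpha$, one has $n^{1-\beta} = n^{\alpha-\beta}\,n^{1-\alpha} \ge n^{1-\alpha}$ for all large $n$, which already yields $E[X_k] \ge \frac{n^{1-\alpha}}{C}(1-o(1))$, with a growing factor to spare.

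For the concentration claim I would apply Chebyshev's inequality, so the real work is bounding $\mathrm{Var}(X_k)$, where $X_k = \sum_{v \in \text{comm }k}\mathbf{1}[v\text{ isolated}]$. The key observation is that for two distinct vertices $u,v$ of community $k$, the events $\{u\text{ isolated}\}$ and $\{v\text{ isolated}\}$ are determined by disjoint sets of potential edges except for the single shared pair $\{u,v\}$; this gives $\mathbb{P}[u,v\text{ both isolated}] = \frac{1}{1-p}\,\mathbb{P}[u\text{ isolated}]\,\mathbb{P}[v\text{ isolated}]$, hence $\mathrm{Cov}(\mathbf{1}[u\text{ iso}],\mathbf{1}[v\text{ iso}]) = \frac{p}{1-p}\,\mathbb{P}[u\text{ iso}]\,\mathbb{P}[v\text{ iso}]$. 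Summing over ordered pairs, $\sum_{u\ne v}\mathrm{Cov} \le \frac{p}{1-p}\big(E[X_k]\big)^2 = o\big((E[X_k])^2\big)$ since $p \to 0$, and the diagonal contributes $\sum_v \mathrm{Var}(\mathbf{1}[v\text{ iso}]) \le E[X_k]$. Thus $\mathrm{Var}(X_k) \le E[X_k] + o((E[X_k])^2)$, so $\mathrm{Var}(X_k)/(E[X_k])^2 \le 1/E[X_k] + o(1) \to 0$ because $E[X_k] = \Theta(n^{1-\beta}) \to \infty$ (this uses $\beta < 1$, which holds in the regime where the bound $\tfrac{n^{1-\alpha}}{2C}$ is nontrivial). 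Finally, since $E[X_k] \gg \frac{n^{1-\alpha}}{2C}$, for large $n$ the event $\{X_k \le \frac{n^{1-\alpha}}{2C}\}$ is contained in $\{E[X_k] - X_k \ge \tfrac12 E[X_k]\}$, whose probability Chebyshev bounds by $4\,\mathrm{Var}(X_k)/(E[X_k])^2 \to 0$.

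I expect the only non-routine step to be the near-independence of the pairwise isolation events, i.e. the identity $\mathbb{P}[u,v\text{ both isolated}] = \frac{1}{1-p}\,\mathbb{P}[u\text{ iso}]\,\mathbb{P}[v\text{ iso}]$ and the resulting $o((E[X_k])^2)$ covariance sum; the remainder is careful bookkeeping with the logarithmic expansions, where one must verify that the $O(\cdot)$ error terms stay $o(1)$ after multiplication by $n$.
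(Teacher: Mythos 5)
Your proposal is correct and follows essentially the same route as the paper: compute $E[X_k]$ from the isolation probability $(1-p)^{n/C-1}(1-q)^{n-n/C}$, then apply Chebyshev after observing that two isolation events in the same community share only the single potential edge $\{u,v\}$, giving $\mathbb{P}[u,v\text{ iso}]=\frac{1}{1-p}\mathbb{P}[u\text{ iso}]\mathbb{P}[v\text{ iso}]$ and hence a negligible covariance contribution. The only cosmetic differences are that the paper lower-bounds the expectation via $1+x\le e^x$ rather than a log expansion and phrases the second moment as $\mathrm{Var}[X_k]=E[X_k](1+o(1))$, and it likewise relies implicitly on the regime $\frac{a+(C-1)b}{C}<1$ that you flag.
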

\begin{proof}
Let $n' = n/C$.
Define a random variable $l_i$ as an indicator of the event that node $i$ is isolated. Then 
\begin{equation*}
E[l_i]=(1-p)^{n'-1}(1-q)^{(C-1)n'}
\end{equation*}
Let $S_k = \{i : \mathcal{C}(i) = k\}$, the vertices of community $k$.
Then $X_k = \sum_{i \in S_k} l_i$ denotes the total number of isolated nodes in a particular community. 
The expected value of $X_k$ goes to infinity:
\begin{IEEEeqnarray*}{rCl}
E[X_k]
&\geq& n'(1-p)^{n'}(1-q)^{(C-1)n'}\\
&=& n'\left(1 + \frac{p}{1-p}\right)^{-n'}\left(1 + \frac{q}{1-q}\right)^{-(C-1)n'}\\
&\geq& n' \left(\exp\left(\frac{p}{1-p}\right)\right)^{-n'} \left(\exp\left(\frac{q}{1-q}\right)\right)^{-(C-1)n'}\\
&=& \frac{1}{C} \exp\left(\log n - \frac{n'p}{1-p} - \frac{(C-1)n'q}{1-q}\right)\\
&=& \frac{1}{C} \exp\left(\log n \left(1 - \frac{a}{C(1-p)} - \frac{(C-1)b}{C(1-q)}\right)\right)\\
&\geq& \frac{1}{C} \exp\left(\log n \left(1 - \frac{\alpha}{1-p}\right)\right)\\
&=& \frac{n^{1-\alpha}}{C}(1 - o(1))
\end{IEEEeqnarray*}
Note that $\mathbb{E}[X]\to \infty$ if $1 > \frac{a+(C-1)b}{C}$. 
We want to show that $Var(X)$ is of the same order of $\mathbb{E}[X]$. 
Then we can use
\begin{equation}
\mathbb{P}[X\leq \frac{1}{2}\mathbb{E}[X]]\leq 4\frac{Var[X]}{\mathbb{E}[X]^2} = \frac{4 \mathbb{E}[X](1+o(1))}{\mathbb{E}[X]^2} \to 0
\end{equation}
Now we need to show that the variance is indeed of the order of expectation.
\begin{align}
Var[X]&= \sum_{i\in S_k}\sum_{j \in S_k} \mathbb{E}[l_il_j]-\mathbb{E}[X_k]^2\nonumber
\end{align}
For $i,j \in S_k$, $i \neq j$, $\mathbb{E}[l_il_j]$ is equal to
\begin{equation*}
(1-p)^{2n'-3}(1-q)^{2(C-1)n'} = \frac{E[l_i]^2}{1-p} = \frac{E[X_k]^2}{(n')^2(1-p)}
\end{equation*}
Hence, using $E[X_k] \to \infty$ we have
\begin{align*}
Var[X_k] &= E[X_k]  + \frac{n'(n'-1)E[X_k]^2}{(n')^2(1-p)} + E[X_k]^2\\
&= E[X_k] (1+o(1))
\end{align*}
Hence we have $\mathbb{P}[X\leq\frac{1}{2}\mathbb{E}[X]]\to 0$. 
This means that with probability going to 1, the number of isolated vertices in a community goes to infinity, growing as $n^{1-\alpha}$. 
This completes the proof.
\end{proof}

Recall the MAP decision rule of Section \ref{sec:MAP} which selects the permutation $\Phi $ which maximized the posterior probability $P[\Phi  | G_1, \pi(G_2)]$. Recall that with our choice of notation, if the true permutation is identified then $\Phi =I$, the identity permutation.
Next lemma shows that any permutation in the automorphism group of the intersection graph $G_1~\cap~G_2$ achieves at least as large of a posterior probability as the true permutation $I$.

\begin{lemma}
\label{permute}
Let $G_1$ and $G_2$ be the correlated SBM graphs. 
Let $\text{Aut}(G_1\cap G_2)$ denote the automorphism group of $G_1~\cap~G_2$.
If $\phi \in \text{Aut}(G_1\cap G_2)$ preserves the community structure, then 
\begin{equation*}
P[\Phi = \phi | G_1, \pi(G_2)] \geq P[\Phi = I | G_1, \pi(G_2)].
\end{equation*}
\end{lemma}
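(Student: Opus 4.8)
The plan is to exploit the MAP characterization derived just above the lemma: the posterior $P[\Phi = \phi | G_1, \pi(G_2)]$ is proportional to $c_1^{|\mathcal{E}^{in}_\phi|} c_2^{|\mathcal{E}^{out}_\phi|}$ with a proportionality constant that does not depend on $\phi$, and $c_1, c_2 \leq 1$. Hence $x \mapsto c_1^x$ and $x \mapsto c_2^x$ are nonincreasing, so it suffices to establish the two cardinality inequalities $|\mathcal{E}^{in}_\phi| \leq |\mathcal{E}^{in}_I|$ and $|\mathcal{E}^{out}_\phi| \leq |\mathcal{E}^{out}_I|$; the statement of the lemma then follows immediately by monotonicity.

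I would introduce shorthand $A = E(G_1)$, $B = E(G_2)$, and write $P^{in}, P^{out} \subseteq \binom{[n]}{2}$ for the sets of intra- and inter-community vertex pairs, so that $\mathcal{E}^{in}_\phi = \mathcal{S}_\phi \cap P^{in}$ and $\mathcal{E}^{out}_\phi = \mathcal{S}_\phi \cap P^{out}$. The first observation is that since $\phi$ preserves the community structure, so does $\phi^{-1}$, and therefore the induced bijection $\sigma_\phi$ of $\binom{[n]}{2}$ maps $P^{in}$ onto $P^{in}$ and $P^{out}$ onto $P^{out}$; moreover $\sigma_\phi$, being a bijection, commutes with union, intersection, and symmetric difference and preserves cardinalities. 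Using $\mathcal{S}_\phi = A \triangle \sigma_\phi(B)$ and intersecting with $P^{in}$, these facts give $\mathcal{E}^{in}_\phi = (A \cap P^{in}) \triangle \sigma_\phi(B \cap P^{in})$. Applying $|X \triangle Y| = |X| + |Y| - 2|X \cap Y|$ reduces $|\mathcal{E}^{in}_\phi| \leq |\mathcal{E}^{in}_I|$ to the single overlap inequality $\bigl|(A \cap P^{in}) \cap \sigma_\phi(B \cap P^{in})\bigr| \geq \bigl|(A \cap P^{in}) \cap (B \cap P^{in})\bigr|$.

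The heart of the argument is then to show $(A \cap B) \cap P^{in} = E(G_1 \cap G_2) \cap P^{in} \subseteq \sigma_\phi(B \cap P^{in})$. Since $\phi \in \text{Aut}(G_1 \cap G_2)$ we have $\sigma_\phi(E(G_1 \cap G_2)) = E(G_1 \cap G_2)$, and since $\sigma_\phi$ fixes $P^{in}$ setwise, it fixes $E(G_1 \cap G_2) \cap P^{in}$ setwise; combined with $E(G_1 \cap G_2) \cap P^{in} \subseteq B \cap P^{in}$ this yields $E(G_1 \cap G_2) \cap P^{in} = \sigma_\phi(E(G_1 \cap G_2) \cap P^{in}) \subseteq \sigma_\phi(B \cap P^{in})$. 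As this set is trivially contained in $A \cap P^{in}$ too, it lies in the intersection $(A \cap P^{in}) \cap \sigma_\phi(B \cap P^{in})$, which gives the overlap bound and hence $|\mathcal{E}^{in}_\phi| \leq |\mathcal{E}^{in}_I|$. Running the identical argument with $P^{out}$ in place of $P^{in}$ gives $|\mathcal{E}^{out}_\phi| \leq |\mathcal{E}^{out}_I|$, completing the proof.

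I do not expect a genuine obstacle: the whole proof is set algebra. The one point requiring care is bookkeeping of the two facts that make it work — that $\sigma_\phi$ commutes with the Boolean operations because it is a bijection, and that community-preservation of $\phi$ forces $\sigma_\phi$ to respect the split into $P^{in}$ and $P^{out}$ — since it is precisely the community-preservation hypothesis, rather than mere membership in $\text{Aut}(G_1 \cap G_2)$, that lets the intra- and inter-community parts be bounded separately, which is needed because $\log(1/c_1) \neq \log(1/c_2)$ in general.
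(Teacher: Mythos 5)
Your proposal is correct, and it follows the same strategy as the paper's proof: reduce the posterior comparison, via the MAP expression $c_1^{|\mathcal{E}^{in}_\phi|}c_2^{|\mathcal{E}^{out}_\phi|}$ with $c_1,c_2\leq 1$, to the two cardinality inequalities $|\mathcal{E}^{in}_\phi|\leq|\mathcal{E}^{in}_I|$ and $|\mathcal{E}^{out}_\phi|\leq|\mathcal{E}^{out}_I|$, and then use the facts that $\sigma_\phi$ fixes $E(G_1\cap G_2)$ and respects the intra/inter split. The difference is in how the counting is executed: the paper argues pair by pair, comparing the ``contribution'' of each $\{i,j\}$ to $\mathcal{E}^{in}_I$ versus $\mathcal{E}^{in}_\phi$, while you work in aggregate with $\mathcal{E}^{in}_\phi=(A\cap P^{in})\,\triangle\,\sigma_\phi(B\cap P^{in})$ and the identity $|X\triangle Y|=|X|+|Y|-2|X\cap Y|$. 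Your version is slightly more watertight: in the paper's last case, a pair $\{i,j\}\notin E(G_1\cup G_2)$ can in fact lie in $\sigma_\phi(E(G_2))$ and hence in $\mathcal{S}_\phi$ (its claimed contribution of $0$ to $|\mathcal{E}^{in}_\phi|$ is not literally true pairwise), and it is precisely your aggregate bookkeeping, through $|\sigma_\phi(B\cap P^{in})|=|B\cap P^{in}|$, that shows such pairs are compensated and the inequality still holds overall. You also correctly isolate where each hypothesis is used: community preservation gives $\sigma_\phi(P^{in})=P^{in}$ and $\sigma_\phi(P^{out})=P^{out}$ (needed because the two exponents carry different weights), and membership in $\text{Aut}(G_1\cap G_2)$ gives $A\cap B\cap P^{in}\subseteq(A\cap P^{in})\cap\sigma_\phi(B\cap P^{in})$, which is the only place the overlap can be compared.
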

\begin{proof}
Consider a vertex pair $\{i,j\}\in {[n]\choose 2}$. 
Suppose $\mathcal{C}(i)=\mathcal{C}(j)$. 
Note that $\{i,j\}$ can only affect the intra community edge set symmetric difference defined in \eqref{intra}. If $\{i,j\}\in E(G_1 \cap G_2)$ then its contribution to $|\mathcal{E}_{\phi}^{in}|$ and $|\mathcal{E}_{I}^{in}|$ is equal. This is because both $\phi$ and $I$ are in $\text{Aut}(G_1\cap G_2)$, so by definition the edges in $ G_1\cap G_2$ remain intact.
If $\{i,j\} \notin E(G_1 \cap G_2)$, then there are two possibilities. 
If $\{i,j\} \in E(G_1 \cup G_2)$ then its contribution to $|\mathcal{E}_{I}^{in}|$ is $1$ and to $|\mathcal{E}_{\phi}^{in}|$ is either $0$ or $1$. 
If $\{i,j\} \notin E(G_1 \cup G_2)$ then its contributions to both is $0$. 
Hence, $|\mathcal{E}_{I}^{in}|\geq |\mathcal{E}_{\phi}^{in}|$.
  
Alternatively, suppose $\mathcal{C}(i)\neq\mathcal{C}(j)$. Note that $\{i,j\}$ can only affect the inter community edge set symmetric difference defined in \eqref{inter}. The rest of the arguments are similar to the previous case. Hence, $|\mathcal{E}_{I}^{out}|\geq |\mathcal{E}_{\phi}^{out}|$.
  
Thus $P[\Phi = \phi | G_1, \pi(G_2)] \geq P[\Phi = I | G_1, \pi(G_2)]$.
\end{proof}
\begin{theorem}[SBM Converse]
\label{thm:converse}
If $\frac{(a+(C-1)b)s_1s_2}{C} < 1-\alpha$ then with probability $1-o(1)$ at least $n^{\alpha}/2$ vertices of the graph $\pi(G_2)$ cannot be deanonymized. Furthermore, these vertices are all mutually confusable, so there are at least $(1-o(1)) \alpha \log_2 n$ bits of uncertainty about the identity of these vertices.

In particular, if $\frac{(a+(C-1)b)s_1s_2}{C} < 1$ then with probability $1-o(1)$, $\pi(G_2)$ cannot be deanonymized exactly using $G_1$.

\end{theorem}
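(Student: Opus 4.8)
The plan is to apply the two preceding lemmas to the \emph{intersection graph} $G_1 \cap G_2$ and then convert the resulting abundance of isolated vertices into residual uncertainty for the optimal estimator. The first observation is that $G_1 \cap G_2$ is itself a sparse SBM: a pair $\{i,j\}$ is an edge of $G_1 \cap G_2$ precisely when it is an edge of the underlying graph $G$ (probability $p$ if $\mathcal{C}(i)=\mathcal{C}(j)$, else $q$) \emph{and} it is retained by the $G_1$-sampling \emph{and} by the independent $G_2$-sampling (probability $s_1s_2$). Since distinct pairs are mutually independent throughout, $G_1 \cap G_2 \sim SBM(n, ps_1s_2, qs_1s_2)$, i.e.\ the SBM obtained from the original one by replacing $a$ with $as_1s_2$ and $b$ with $bs_1s_2$.

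Next I would apply Lemma~\ref{lemma:expected} to $G_1 \cap G_2$ with its threshold parameter chosen to be $\alpha_0 := 1-\alpha$. The hypothesis $\frac{(a+(C-1)b)s_1s_2}{C} < 1-\alpha$ is exactly the condition $\frac{a'+(C-1)b'}{C} < \alpha_0$ of the lemma for $a'=as_1s_2$, $b'=bs_1s_2$, so for every community $k$ the number $X_k$ of vertices that are isolated in $G_1 \cap G_2$ satisfies $\mathbb{P}\big[X_k \le \tfrac{n^{\alpha}}{2C}\big] \to 0$ (using $n^{1-\alpha_0}=n^\alpha$). A union bound over the $C$ fixed communities then gives: with probability $1-o(1)$ every community contains more than $\frac{n^\alpha}{2C}$ such vertices, so $G_1 \cap G_2$ has more than $\frac{n^\alpha}{2}$ isolated vertices altogether. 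Write $T_k$ for the isolated vertices of $G_1 \cap G_2$ in community $k$ and $T=\bigcup_k T_k$.

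Then comes the confusability step. Any permutation $\phi$ that is the identity on $[n]\setminus T$ and permutes each $T_k$ within itself preserves community structure and lies in $\text{Aut}(G_1 \cap G_2)$, since no edge of $G_1 \cap G_2$ touches $T$ and hence relabelling inside $T$ fixes its edge set. By Lemma~\ref{permute}, every such $\phi$ satisfies $P[\Phi = \phi \mid G_1, \pi(G_2)] \ge P[\Phi = I \mid G_1, \pi(G_2)]$, and there are $\prod_k |T_k|!$ of them — a number which is $\ge 2$, in fact $\to \infty$, on the good event. Because the posteriors are known explicitly (proportional to $c_1^{|\mathcal{E}_\phi^{in}|}c_2^{|\mathcal{E}_\phi^{out}|}$ from Section~\ref{sec:MAP}) and sum to $1$ over community-preserving permutations, the posterior probability of the true permutation is at most $1/\prod_k|T_k|! = o(1)$; by the optimality of the MAP rule, \emph{no} estimator recovers $\pi$ with non-vanishing probability. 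The $\ge n^\alpha/2$ vertices of $T$ are therefore pairwise confusable within their communities, and since $m!\ge m$ their residual label entropy is at least $\log_2|T_k| > \log_2\!\big(n^\alpha/(2C)\big) = (1-o(1))\alpha\log_2 n$ bits, proving the first part. For the ``in particular'' statement, when $\frac{(a+(C-1)b)s_1s_2}{C} < 1$ we may pick $\alpha>0$ small enough that $\frac{(a+(C-1)b)s_1s_2}{C} < 1-\alpha$ still holds and invoke the above; equivalently, in that regime Lemma~\ref{lemma:expected} already forces $X_1\to\infty$ for $G_1\cap G_2$, so community $1$ contains at least two isolated vertices of $G_1\cap G_2$ with probability $1-o(1)$, and swapping them rules out exact deanonymization.

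The part needing the most care is the passage from Lemma~\ref{permute}'s ``$\ge$'' to an actual failure probability: one must preclude a tie-break landing on the truth. The robust way is the posterior-mass argument sketched above, applied not only to the truth but to \emph{every} candidate $\psi$ — the within-community automorphisms of the intersection graph implied by $\psi$ yield $\prod_k |T_k(\psi)|!$ candidates that are at least as likely, so the posterior mass on any $\psi$ is at most the reciprocal of that count — together with conditioning on the high-probability event that the \emph{true} $G_1\cap G_2$ has more than $n^\alpha/2$ isolated vertices. One should also note that a vertex isolated in $G_1\cap G_2$ may still be incident to edges of $G_1$ alone or of $G_2$ alone, which is exactly why Lemma~\ref{permute} (and not a naive relabelling symmetry of the observation $(G_1,\pi(G_2))$) is the correct instrument here.
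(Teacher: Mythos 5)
Your proposal is correct and follows essentially the same route as the paper: view $G_1 \cap G_2$ as $SBM(n,ps_1s_2,qs_1s_2)$, invoke Lemma~\ref{lemma:expected} (with its threshold parameter set to $1-\alpha$) to get $\Omega(n^{\alpha})$ isolated vertices per community, and then use Lemma~\ref{permute} to show the community-preserving automorphisms that permute these isolated vertices all have posterior at least that of the identity, so the MAP estimator fails and each such vertex carries $(1-o(1))\alpha\log_2 n$ bits of uncertainty. Your extra care with the tie-breaking/posterior-mass bookkeeping and the explicit substitution $\alpha_0 = 1-\alpha$ only tightens what the paper states more loosely.
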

\begin{proof}
Note that $G_1\cap G_2$ $\sim$ $SBM(n,ps_1s_2,qs_2s_1)$ with the community labels known. 
Let $X_k$ be the number of isolated vertices in community $k$ of $G_1\cap G_2$.
By Lemma \ref{lemma:expected}, with probability $1-o(1)$, $X_k = \Omega(n^{1-\alpha})$.
Any permutation that moves only these isolated vertices, preserving community structure, is an automorphism of $G_1\cap G_2$.
By Lemma \ref{permute}, the adversary's posterior probability of such a permutation is at least as large as the posterior probability of the identity.
Thus the MAP estimator for the whole permutation $\phi$ succeeds with probability at most $\frac{1}{|Aut(G_1\cap G_2)|}$.
As long as $\alpha > 0$, $X_k \to \infty$ for all $k$ and $|Aut(G_1\cap G_2)| \to \infty$.
For some isolated vertex $i$, the MAP estimator for $\phi(i)$ succeeds with probability at most $\frac{1}{|X_{\mathcal{C}(i)}|} = n^{-\alpha}/2$.
With probability $1-o(1)$, there are at least $(1-o(1)) \alpha \log_2 n$ bits of uncertainty about the identity of a particular isolated vertex.
\end{proof}
The converse implies that sufficiently sparse pairs of SBM graphs cannot be exactly deanonymized.
Next we provide a nearly matching achievability region, i.e., a sufficient condition for deanonymizing graph  $\pi(G_2)$ and $G_1$. The importance of this result is that it illustrates the strength of our converse in Theorem~\ref{thm:converse}.
\begin{theorem}
\label{thm:ach}
Let $p = \frac{a \log n}{n}$ and let $q = \frac{b \log n}{n}$.
Let $G\sim SBM(n,p,q)$ and let $G_1$ and $G_2$ be subsampled from $G$ with probabilities $s_1$ and $s_2$.
If $\frac{(a+(C-1)b)s_1s_2}{C} > 2$, then there is an algorithm which exactly recovers $\pi$ with probability $1 - o(1)$ given $\pi(G_2)$, $G_1$, and the true community labels for each of these graphs.
\end{theorem}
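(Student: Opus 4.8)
The plan is to show that the optimal (MAP/ML) estimator of Section~\ref{sec:MAP} succeeds with probability $1-o(1)$, which is enough since no estimator can do better. Recall it returns the community-preserving permutation $\phi$ minimizing $\mathrm{cost}(\phi) := w_1|\mathcal{E}^{in}_\phi| + w_2|\mathcal{E}^{out}_\phi|$ with $w_1=\log(1/c_1)$, $w_2=\log(1/c_2)$, and that a successful attack corresponds to $\Phi=I$; so I want to show that with probability $1-o(1)$ the identity $I$ is the \emph{unique} minimizer. Two observations set up the computation. First, the intersection graph $H:=G_1\cap G_2$ is distributed as $SBM(n,ps_1s_2,qs_1s_2)$ with known community labels, so each vertex has expected $H$-degree $(1+o(1))\gamma\log n$, where $\gamma := \frac{(a+(C-1)b)s_1s_2}{C}>2$; in particular $\gamma>1$ ensures $H$ has no isolated vertices w.h.p. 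Second, since $p,q=\Theta(\log n/n)$, we have $w_1,w_2=(1+o(1))\log n$.

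The argument is a union bound over $\phi\neq I$ organized by $k\ge 2$, the number of displaced vertices (the size of the support $F(\phi)$). For a fixed such $\phi$, a direct computation gives $\mathbb{E}[\mathrm{cost}(\phi)-\mathrm{cost}(I)]=(2+o(1))\gamma k\log^2 n$: each displaced vertex typically contributes its $\approx\gamma\log n$ incident $H$-edges to the symmetric difference $\mathcal{S}_\phi$ that were absent from $\mathcal{S}_I$, worth $(1+o(1))\gamma k\log^2 n$ in total, and an equal-order positive drift comes from the reshuffled pairs outside $H$. The bad event for $\phi$ is $\mathrm{cost}(\phi)\le\mathrm{cost}(I)$, i.e.\ a downward deviation of this difference by its entire mean; a suitable concentration inequality gives $\mathbb{P}[\mathrm{cost}(\phi)\le\mathrm{cost}(I)]\le n^{-(1+\Omega(1))k}$ when $\gamma>2$. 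There are at most $\binom nk k!\le n^k$ community-preserving permutations with $|F(\phi)|=k$, so $\sum_{k\ge 2}n^k\cdot n^{-(1+\Omega(1))k}=o(1)$ and the union bound closes; the regime $k=\Theta(n)$ is handled separately using only that displacing a constant fraction of vertices disrupts a constant fraction of the $\Theta(n\log n)$ edges of $H$. (The constant $2$ is not optimal: recasting the bad event for a transposition $(i\ j)$ as ``$i$ and $j$ have near-identical $H$-neighbourhoods'', an event of probability $\approx n^{-2\gamma}$, suggests the true threshold is $\gamma>1$, matching Theorem~\ref{thm:converse}; but $\gamma>2$ already shows the converse is essentially tight.)

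I expect the crux to be the per-permutation tail bound. The quantity $\mathrm{cost}(\phi)-\mathrm{cost}(I)$ is a sum of $\Theta(kn)$ bounded terms, each of magnitude $O(\log n)$, with mean $\Theta(\gamma k\log^2 n)$, so a Hoeffding / bounded-difference estimate yields only $\exp(-\Theta(k\log^2 n/n))$---hopelessly weak against the $n^k$ union. The fix is a Bernstein/Bennett bound exploiting that each disagreement indicator equals $1$ with probability only $O(\log n/n)$, so the variance is $O(k\log^3 n)$, much smaller than the squared mean $\Theta(k^2\log^4 n)$; one should also first subtract the part of the overlap that is deterministic given $H$ (the common-$H$-neighbour contributions of the identified vertices), so that only a genuinely rare residual has to fluctuate. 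Two secondary nuisances: the terms are mildly dependent (functions of the shared edge variables of $G_1$ and $G_2$), which I would handle by partitioning the sum into $O(1)$ sub-sums over disjoint edge variables or via a bounded-difference martingale; and one must verify that the $o(1)$ corrections in $w_1,w_2\sim\log n$ and in the degree estimates do not erode the exponent, with $k=2$ (the smallest support) being the delicate extremal case.
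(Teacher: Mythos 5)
First, note that the paper does not actually contain a proof of Theorem~\ref{thm:ach}: it is explicitly omitted, with a pointer to the Erd\H{o}s--R\'enyi graph-matching achievability argument of \cite{cullina2016improved}, of which this theorem is said to be a ``fairly straightforward adaptation.'' Your route --- show the MAP/ML estimator of Section~\ref{sec:MAP} succeeds, union bound over community-preserving permutations organized by the support size $k$ (at most $n^k$ of them), and beat this entropy with a per-permutation estimate $\mathbb{P}[\mathrm{cost}(\phi)\le\mathrm{cost}(I)]\le n^{-(1+\Omega(1))k}$ --- is exactly that standard route, and your first-moment bookkeeping is sound: the mean cost gap $(2+o(1))\gamma k\log^2 n$ with $\gamma=\frac{(a+(C-1)b)s_1s_2}{C}$, the weights $w_1,w_2=(1+o(1))\log n$ (assuming $s_1,s_2<1$, else $c_1=c_2=0$ and the weights degenerate), and the permutation count are all correct.

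The genuine gap is the step you yourself flag as the crux: the per-permutation tail bound is asserted, not derived, and in particular nothing in the sketch ties the achievable constant to the stated threshold $\gamma>2$. The difficulty is not just ``use Bernstein instead of Hoeffding.'' The summands for a moved pair $e$ involve the edge variables at $e$ and at $\sigma_\phi^{-1}(e)$, so consecutive terms along each orbit of $\sigma_\phi$ share variables; once you split each orbit into independent classes (your proposed fix) and run Bernstein per class with mean $\approx\gamma k\log^2 n$, variance $O(\gamma k\log^3 n)$ and term bound $O(\log n)$, the exponent you get is $c\,\gamma k\log n$ for a constant $c$ that depends on exactly how the split and the bound are executed --- plausibly $c<1/2$, in which case the argument closes only for some $\gamma$ strictly larger than $2$, not for all $\gamma>2$. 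Pinning down the constant requires the more careful moment-generating-function/orbit analysis that is the technical heart of the cited work (where the ``cancellation'' events --- a moved $H$-edge landing on an edge of the other graph --- are treated exactly rather than through a generic variance proxy), and the delicate case $k=2$ must be checked against the $o(1)$ corrections in the weights and degrees, as you note. So the proposal identifies the right architecture and the right obstacle, but the quantitative core that makes the theorem true with the constant $2$ is missing; as the paper gives no proof either, the honest conclusion is that your outline reconstructs the intended adaptation of \cite{cullina2016improved} without yet supplying its decisive estimate.
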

This proof is omitted due to space constraints.
Recent work investigates the analogous problem for Erd\H{o}s R\'{e}nyi graphs \cite{cullina2016improved}.
Theorem~\ref{thm:ach} follows from fairly straightforward adaptation of the argument used there.
The bound in Theorem~\ref{thm:ach} has the same dependence on $a$, $b$, $s_1$, and $s_2$ as Theorem~\ref{thm:converse}.
In the case of exact deanonimization, the threholds differ only by a constant factor of 2.
Consequently, the conditions that we require our anonymized graph to satisfy are not excessively conservative.

\section{Community Recovery}\label{sec:recoveryregion}
Our converse identifies a region on parameters of the model that guarantees no adversary can deanonymize $\pi(G_2)$, the anonymized graph, given access to the auxiliary graph $G_1$. The anonymized graph $\pi(G_2)$ is useful to the third parties only if they are still able to perform community detection in some portion of the identified region. 

In this section, we show that there indeed exists a region in which community detection succeeds but deanonymization fails. To do so, we will combine Theorem~\ref{thm:converse} with a recent result regarding the feasibility of exact recovery of community labels in an SBM graph. This result is tight, but we only need the achievability part.
The following theorem was proved for the two-community case by Abbe et al. \cite{abbe2014exact} and independently by Mossel et al. \cite{mossel2014consistency}, both in 2014.
Hajek et al. generalized the result to arbitrary fixed $C$ \cite{hajek2015achieving}.
\TODO{cites}
\begin{theorem}\cite{hajek2015achieving}
\label{theorem:comm}
Let $G \sim SBM(n,p,q)$ with $C$ communities, where $p = \frac{a \log n}{n}$ and $q = \frac{b \log n}{n}$.
If $\sqrt{a} - \sqrt{b} > \sqrt{C}$, then there is an algorithm that exactly recovers the community labels of $G$ with probability $1 - o(1)$.
\end{theorem}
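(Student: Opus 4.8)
Only achievability is claimed, so it suffices to produce one algorithm. When $p>q$ the maximum-likelihood estimator is the balanced $C$-partition with the most within-community edges, which is NP-hard in the worst case, so I would analyze its semidefinite relaxation as in \cite{hajek2015achieving}. Encode a balanced partition $\sigma$ by the cluster matrix $X$ with $X_{uv}=1$ when $\sigma(u)=\sigma(v)$ and $X_{uv}=0$ otherwise, and, with $A$ the adjacency matrix of $G$, consider
\begin{equation*}
\max_{X}\ \langle A, X\rangle \quad\text{subject to}\quad X\succeq 0,\quad \operatorname{diag}(X)=\mathbf{1},\quad X\mathbf{1}=\tfrac{n}{C}\mathbf{1},\quad X\geq 0.
\end{equation*}
The goal is to show that with probability $1-o(1)$ the planted cluster matrix $X^{\ast}$ is the unique optimizer. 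This is a dual-certificate (KKT) argument: one exhibits Lagrange multipliers — a diagonal matrix $D$ for the $\operatorname{diag}(X)=\mathbf{1}$ constraint, a symmetric rank-two term $\mu\mathbf{1}^{\top}+\mathbf{1}\mu^{\top}$ for the row-sum constraint, and an entrywise-nonnegative $B$ supported on cross-community pairs for the $X\geq 0$ constraint — so that the stationarity matrix $S:=D+\mu\mathbf{1}^{\top}+\mathbf{1}\mu^{\top}-B-A$ obeys $SX^{\ast}=0$, $S\succeq 0$, and $S$ is strictly positive definite on the orthogonal complement of $\operatorname{span}(\chi_{1},\dots,\chi_{C})$, where $\chi_{i}$ is the indicator vector of community $i$. (A more elementary alternative is a two-phase scheme: spectral clustering to obtain a labeling correct on all but $o(n/\log n)$ vertices — straightforward here since the mean degree is $\Theta(\log n)\to\infty$ — followed by one round of reassigning each vertex to the community where it has the most neighbors; its analysis reduces to the same estimate below.)

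The crux in either route is a single large-deviations bound: with probability $1-o(1)$, \emph{every} vertex has strictly more neighbors in its own community than in any other. Fix $v$ with $\mathcal{C}(v)=i$; the counts $D_{i}^{\ast}(v)\sim\mathrm{Bin}(n/C-1,p)$ and $D_{j}^{\ast}(v)\sim\mathrm{Bin}(n/C,q)$ for $j\neq i$ are independent, with means $\sim\tfrac{a}{C}\log n$ and $\sim\tfrac{b}{C}\log n$. Applying a Chernoff bound to $D_{j}^{\ast}(v)-D_{i}^{\ast}(v)$ and optimizing the exponential tilt at $e^{t}=\sqrt{a/b}$ yields
\begin{equation*}
\Pr\!\left[D_{j}^{\ast}(v)\geq D_{i}^{\ast}(v)\right]\ \leq\ \exp\!\left(-\frac{\log n}{C}\bigl(\sqrt{a}-\sqrt{b}\bigr)^{2}(1-o(1))\right)\ =\ n^{-(\sqrt{a}-\sqrt{b})^{2}/C+o(1)}.
\end{equation*}
A union bound over the $n$ vertices and $C-1$ competing communities gives failure probability at most $(C-1)\,n^{\,1-(\sqrt{a}-\sqrt{b})^{2}/C+o(1)}$, which vanishes exactly when $\sqrt{a}-\sqrt{b}>\sqrt{C}$. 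For the two-phase version one instead needs the gap $D_{i}^{\ast}(v)-\max_{j\neq i}D_{j}^{\ast}(v)$ to exceed the number of Phase-1 errors incident to $v$, which is $o(\log n)$ uniformly and shifts the exponent by only $o(1)$.

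The remaining work, which is also where the main difficulty lies, is twofold. First, the dual variables $D,\mu,B$ must be written down explicitly in terms of the vertex degree profiles and the three certificate conditions verified; $SX^{\ast}=0$ and the identification of $\ker S$ are arranged by making $S$ block-structured, while the delicate condition that $S$ is positive definite on $\operatorname{span}(\chi_{1},\dots,\chi_{C})^{\perp}$ is reduced — via a diagonal-dominance/Weyl-type argument — to the large-deviations event above together with an auxiliary concentration estimate $\|A-\mathbb{E}A\|=O(\sqrt{\log n})$ (matrix Bernstein). Second, one must check that every $o(1)$ in the exponent is genuine: the Binomial-versus-Poisson corrections, the $\tfrac{\log n}{n}$-versus-constant distinctions, and (for the two-phase scheme) the error-propagation slack each perturb the exponent only by $o(1)$, which matters because the hypothesis $\sqrt{a}-\sqrt{b}>\sqrt{C}$ is precisely the statement that the leading exponent $1-(\sqrt{a}-\sqrt{b})^{2}/C$ is strictly negative. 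I would not pursue the matching impossibility direction, as only achievability is needed here (and is all that Theorem~\ref{thm:converse} and the community-recovery-region argument use).
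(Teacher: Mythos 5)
This statement is not proved in the paper at all: Theorem~\ref{theorem:comm} is imported verbatim from \cite{hajek2015achieving} (building on \cite{abbe2014exact,mossel2014consistency}) and is used only as a black box in Corollary~\ref{corollary1}, so there is no in-paper proof to compare against. Your sketch correctly reconstructs the argument of the cited reference: the SDP relaxation of the balanced-partition ML estimator, the dual (KKT) certificate with $SX^{\ast}=0$, $S\succeq 0$ and positive definiteness on $\operatorname{span}(\chi_1,\dots,\chi_C)^{\perp}$, and the driving large-deviations estimate $\Pr[D_j^{\ast}(v)\geq D_i^{\ast}(v)]\leq n^{-(\sqrt{a}-\sqrt{b})^2/C+o(1)}$ whose union bound vanishes exactly when $\sqrt{a}-\sqrt{b}>\sqrt{C}$; the two-phase (spectral plus local-refinement) alternative you mention is also a legitimate route to the same threshold. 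One technical caveat: the spectral estimate you need, $\|A-\mathbb{E}A\|=O(\sqrt{\log n})$, does not follow from matrix Bernstein, which in this degree regime only yields $O(\log n)$; you need a Feige--Ofek/Bandeira--van Handel--type bound (as used in \cite{hajek2015achieving}) to get the $\sqrt{np}$ scaling, and with that substitution your outline is the standard proof.
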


\begin{corollary}
	\label{corollary1}
As long as $s_1 < 1$, there are parameters $s_2$, $a$ and $b$ such that $\pi(G_2)$ cannot be deanonymized exactly using $G_1$ but exact community recovery is possible in $G_2$.
\end{corollary}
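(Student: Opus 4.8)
The plan is to produce explicit constants for which both the hypothesis of Theorem~\ref{thm:converse} and the hypothesis of Theorem~\ref{theorem:comm} (applied to $G_2$) hold simultaneously. Two reductions set this up. First, by Theorem~\ref{thm:converse}, $\pi(G_2)$ fails to be exactly deanonymizable with probability $1-o(1)$ whenever $\frac{(a+(C-1)b)s_1 s_2}{C} < 1$. Second, $G_2 \sim SBM(n, ps_2, qs_2)$, i.e.\ an SBM with effective parameters $as_2$ and $bs_2$, so Theorem~\ref{theorem:comm} yields an exact community-recovery algorithm for $G_2$ (with probability $1-o(1)$) as soon as $\sqrt{as_2}-\sqrt{bs_2} > \sqrt{C}$, equivalently $s_2(\sqrt a - \sqrt b)^2 > C$. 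Hence it suffices to exhibit $a, b \ge 0$ and $s_2 \in (0,1]$ with
\[
(a+(C-1)b)\,s_1 s_2 \;<\; C \;<\; s_2\,(\sqrt a - \sqrt b)^2 .
\]

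First I would take $b = 0$ and $s_2 = 1$; the two inequalities then collapse to $as_1 < C < a$, which is solvable precisely because $s_1 < 1$ makes the open interval $(C,\,C/s_1)$ nonempty. Choosing, say, $a = \tfrac{C}{2}\bigl(1 + s_1^{-1}\bigr)$ (the midpoint of that interval) gives $\frac{a s_1}{C} < 1$, so Theorem~\ref{thm:converse} says $\pi(G_2)$ cannot be deanonymized exactly using $G_1$ with probability $1-o(1)$; and $G_2 \sim SBM\!\bigl(n, a\tfrac{\log n}{n}, 0\bigr)$ has $\sqrt a - \sqrt 0 = \sqrt a > \sqrt C$, so Theorem~\ref{theorem:comm} gives an algorithm recovering the communities of $G_2$ with probability $1-o(1)$. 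Taking $s_2<1$ instead only widens the admissible range of $a$ (replace $a$ by $a/s_2$), so the argument also shows the good region is fat rather than merely nonempty.

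The one point that needs care is invoking Theorem~\ref{theorem:comm} at $b=0$ (i.e.\ $q=0$): there $G_2$ is a disjoint union of $C$ Erd\H{o}s--R\'enyi graphs on $n/C$ vertices with within-community edge probability $\sim \frac{a}{C}\cdot\frac{\log(n/C)}{n/C}$, each connected with high probability precisely because $a>C$, so community recovery degenerates to reading off connected components --- consistent with the $\sqrt a > \sqrt C$ threshold. If one prefers to stay strictly in the regime $q>0$, replace $b=0$ by a small positive constant: both displayed inequalities are strict and hence survive a sufficiently small perturbation of $b$, and the same $a$ continues to work. I expect this continuity/edge-case bookkeeping to be essentially all the work; the remainder is a one-line parameter count.
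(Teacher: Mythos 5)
Your proposal is correct and follows essentially the same route as the paper: combine the two inequalities from Theorem~\ref{thm:converse} and Theorem~\ref{theorem:comm} and note that for $b\to 0$ the admissible range $\frac{C}{s_2} < a < \frac{C}{s_1 s_2}$ is nonempty precisely because $s_1 < 1$. Your extra care about the $b=0$ edge case (or perturbing to small $b>0$) is a harmless refinement of the same argument.
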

\begin{proof}
From Theorem~\ref{theorem:comm} we have the inequality $\sqrt{as_2} - \sqrt{bs_2} > \sqrt{C}$. 
From Theorem~\ref{thm:converse} we have $(a+(C-1)b)s_1s_2 < C$. This region is not empty. For instance, for $b \rightarrow 0$, $a$  must lie in the range $\frac{C}{s_2} < a < \frac{C}{s_1s_2}$.
\end{proof}

\begin{figure}[t]
	\includegraphics[width=0.5\textwidth]{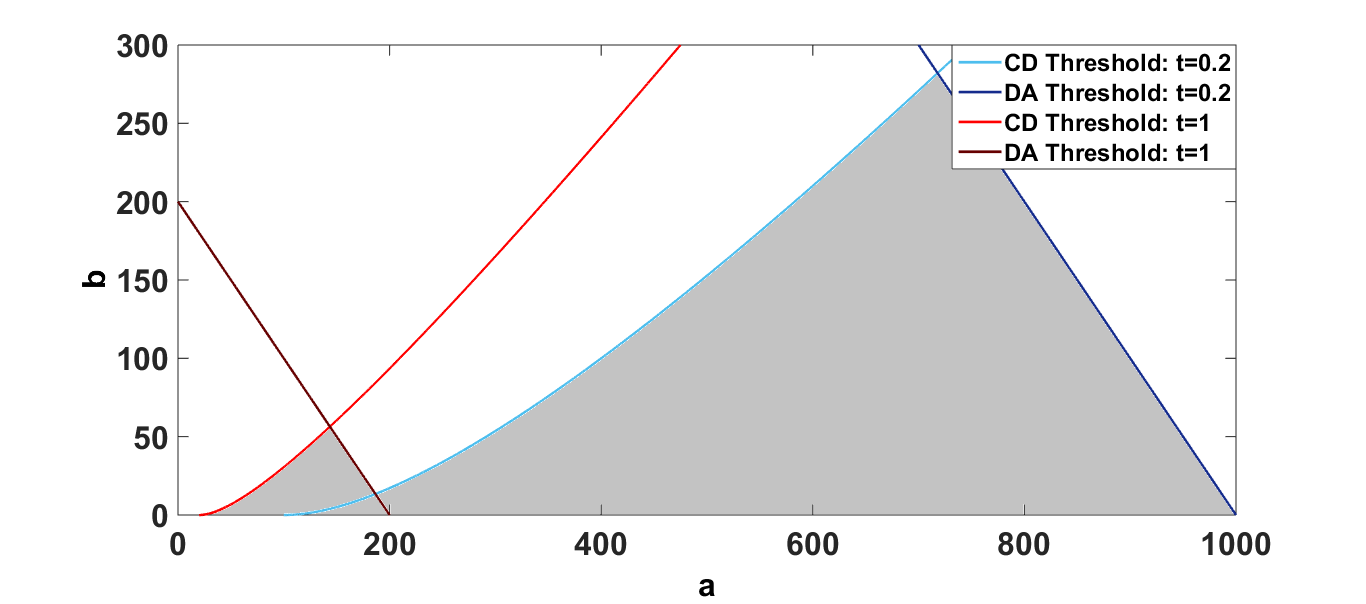}
	\vspace{-20pt}\caption{Plot showing the safe region (shaded) for $s_1=0.1$, $s_2=0.5$ and $C=2$. Here CD denotes Community Detection and DA denotes Deanonymization.}\label{intersect}
\end{figure}

Instead of releasing $\pi(G_2)$ in its original form, we could release a edge-subsampled version.
By subsampling a graph, we mean randomly including each edge of the graph independently with some probability $t$.
In many cases, some choice of $t$ results in a graph that falls into the safe region. 

Then the necessary condition for community detection becomes
$\sqrt{as_2t} - \sqrt{bs_2t} > \sqrt{C}$
and the condition preventing exact deanonymization becomes
$(a+(C-1)b)s_1s_2t < C.$
This region is depicted in Figure \ref{intersect} for two values of $t$. For $t=1$, we recover the region corresponding to Corollary \ref{corollary1}. It can be seen that subsampling with $t=0.2$ results in a substantial increase in the parameter space of interest.
The subsampling idea works for any number of communities, but in the two-community case, we have a very simple condition. 
\begin{corollary}
For $C=2$, if exact recovery of communities is possible in $G_2$ and $\left(\frac{a-b}{a+b}\right)^2 + (1-s_1)^2 > 1$, then there is some subsampling probability $t$ such that the $t$-subsampled version of $\pi(G_2)$ still allows community detection but cannot be deanonymized given $G_1$.
\end{corollary}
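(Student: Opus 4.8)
The plan is to reduce the statement to finding a subsampling probability $t\in(0,1]$ lying in an explicit open interval, and then to check that the two hypotheses are exactly what make that interval meet $(0,1]$. Specializing the two conditions displayed just before the corollary to $C=2$: community detection succeeds on the $t$-subsampled copy of $\pi(G_2)$ (which is $SBM(n,ps_2t,qs_2t)$) as soon as $\sqrt{as_2t}-\sqrt{bs_2t}>\sqrt 2$, i.e. $s_2t(\sqrt a-\sqrt b)^2>2$; and, by Theorem~\ref{thm:converse}, exact deanonymization is precluded whenever $(a+b)s_1s_2t<2$. Writing $\ell=\frac{2}{s_2(\sqrt a-\sqrt b)^2}$ and $u=\frac{2}{(a+b)s_1s_2}$, both requirements hold precisely when $t\in(\ell,u)$, so it suffices to exhibit a $t$ with $\ell<t<u$ and $t\le 1$, i.e. to prove $\ell<\min(u,1)$ (and then any such $t$ works, taking $t\in(\ell,1]$ in case $u>1$; note $\ell>0$ always).

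Next I would dispatch $\ell<1$ and $\ell<u$ separately, matching each to one hypothesis. The assumption that exact community recovery is possible in $G_2$ is, by Theorem~\ref{theorem:comm} applied to $SBM(n,ps_2,qs_2)$, exactly $s_2(\sqrt a-\sqrt b)^2>2$, which is literally $\ell<1$. For $\ell<u$: clearing denominators, $\ell<u$ is equivalent to $(a+b)s_1<(\sqrt a-\sqrt b)^2=a+b-2\sqrt{ab}$, i.e. to $2\sqrt{ab}<(a+b)(1-s_1)$. Here $s_1<1$, since otherwise the corollary's hypothesis would force $\left(\frac{a-b}{a+b}\right)^2>1$, impossible; so both sides are nonnegative and may be squared. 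Squaring and dividing by $(a+b)^2$ gives $\frac{4ab}{(a+b)^2}<(1-s_1)^2$, and since $\frac{4ab}{(a+b)^2}=1-\left(\frac{a-b}{a+b}\right)^2$, this rearranges to $\left(\frac{a-b}{a+b}\right)^2+(1-s_1)^2>1$, the remaining hypothesis. With both $\ell<1$ and $\ell<u$ in hand, pick any $t$ with $\ell<t<\min(u,1)$; it satisfies both the community-detection and the anti-deanonymization conditions, which finishes the proof.

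There is essentially no analytic difficulty here — the whole argument is a short chain of equivalences. The only point requiring care is bookkeeping: recognizing that the ``community recovery in $G_2$'' assumption is precisely what yields $\ell<1$ (so that a legitimate $t\le 1$ exists at all), while the inequality $\left(\frac{a-b}{a+b}\right)^2+(1-s_1)^2>1$ is precisely what yields $\ell<u$ (so that the two feasibility conditions can hold simultaneously), together with checking the sign condition $s_1<1$ that licenses the squaring step.
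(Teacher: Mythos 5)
Your proof is correct and is exactly the intended argument: the paper states this corollary without proof (deferring to the depicted region), and your derivation simply specializes the two displayed conditions to $C=2$, reduces existence of a valid $t\in(0,1]$ to $\ell<\min(u,1)$, and matches $\ell<1$ to the community-recovery hypothesis and $\ell<u$ (via $2\sqrt{ab}<(a+b)(1-s_1)$ and $\frac{4ab}{(a+b)^2}=1-\bigl(\frac{a-b}{a+b}\bigr)^2$) to the hypothesis $\bigl(\frac{a-b}{a+b}\bigr)^2+(1-s_1)^2>1$, with the sign check $s_1<1$ handled properly. No gaps worth noting.
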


The simple structure of this region is depicted in Figure~\ref{fig:subsample}.

\begin{figure}[t]
	\includegraphics[width=0.5\textwidth]{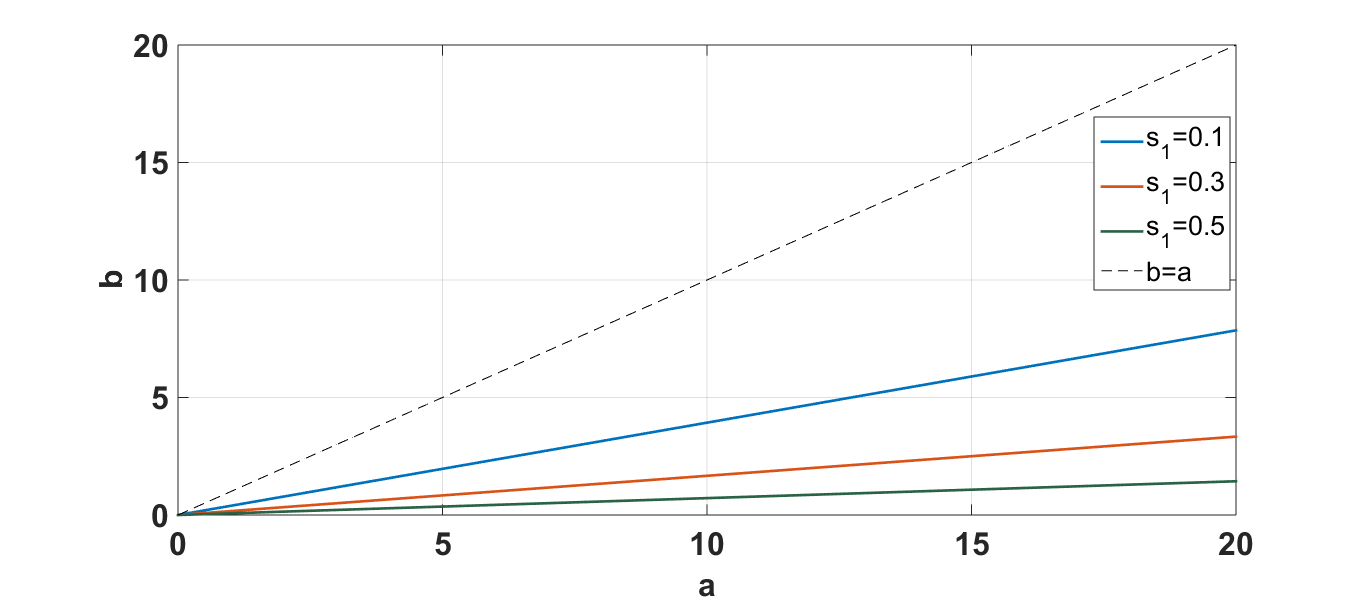}\vspace{-11pt}
	\caption{Plot depicting the parameter space(below the curves), for which exact community recovery is possible but exact deanonymization is impossible for some under sampling probability $t$, for various values of $s_1$ and $C=2$.}\label{fig:subsample}
\end{figure}

The ratio $\frac{a-b}{a+b}=\frac{p-q}{p+q}$ measures the strength of the community structure in the SBM graph.
Note that because $0 \leq b \leq a$, we have $0 \leq \frac{a-b}{a+b} \leq 1$.
Unsurprisingly, when the community structure is stronger, fewer edges of $G_2$ must be preserved to allow community recovery.
This allows us to create a greater degree of anonymity.
The other factor $(1-s_1)$, measures the amount of ground truth information included in the auxiliary graph.
As more of this information is publicly available, it becomes harder to produce an anonymized version of $G_2$.

To produce a graph that can be published, we need to find a parameter range where there are many isolated vertices in $G_1 \cap G_2$ and none in $G_2$ (because isolated vertices in $G_2$ prevent exact community recovery).
When $s_1 = 1$, we have $E(G_2) \subseteq E(G_1)$, so $G_2 = G_1 \cap G_2$ and and it becomes impossible to produce a safe graph, regardless of the strength of the community structure.

\section{Sublinear communities}
\label{sec:LargeComm}
So far, we have only considered graphs with a constant number of communities, or equivalently communities with a number of vertices linear in $n$.
In real world graphs, community structure arises for a variety of reasons.
For example, a community derived from common interest in some popular media franchise could easily have linear size.
As the overall network grows, the probability of a new user being a member of this community would be close to constant.
In contrast, communities that arise from local real-world interactions will generally be sub-linear in size.

We have assumed that the adversary is capable of detecting the community structure in both the public and anonymized graphs and correctly matching a community in one graph to a community in the other.
If $C$ is constant, the community level matching reduces the anonymity of a single vertex by an asymptotically negligible amount.
Without the community level matching, $\log_2 n$ bits are required to describe the corresponding vertex in other graph.
With it, $\log_2 (n/C) = (1-o(1)) \log_2 n$ bits are required.
Because of this, the asymptotic threshold in Theorem~\ref{thm:converse} does not depend on $C$, 
When the number of communities is growing and the size of a typical community is sub-linear, the community level matching contains a non-negligible amount of information about each vertex identity.
Consequently, when $C \to \infty$, the threshold for anonymity does depend on the growth rate of $C$.
Our converse argument depends on the existence of community preserving automorphisms of $G_1 \cap G_2$.
If the number of communities is growing with $n$, it is possible to have a large number of total isolated vertices in the graph, but still no communities with multiple isolated vertices.
For this regime, we are not aware of results giving the conditions under which  community recovery is possible, but we derive the following converse for the deanonymization problem.
\begin{theorem}
\label{thm:converse-comm}
Let the number of communities be $C = n^{\beta}$ for some $0 < \beta < 1$.
If $\frac{(a+(C-1)b)s_1s_2}{C} < 1-\alpha-\beta$ then with probability $1-o(1)$ at least $n^{\alpha}/2$ vertices of the graph $\pi(G_2)$ cannot be deanonymized. Furthermore, these vertices are all mutually confusable, so there are at least $(1-o(1)) \alpha \log_2 n$ bits of uncertainty about the identity of these vertices.
\end{theorem}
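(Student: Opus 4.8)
The plan is to run the argument of Theorem~\ref{thm:converse} essentially verbatim, the only new work being to re-examine Lemma~\ref{lemma:expected} when the number of communities $C = n^{\beta}$ grows polynomially in $n$. As in that proof, each edge of the underlying $G \sim SBM(n,p,q)$ survives into $G_1 \cap G_2$ independently with probability $s_1 s_2$, so $G_1 \cap G_2 \sim SBM(n, p s_1 s_2, q s_1 s_2)$ on the same $C = n^{\beta}$ communities, whose labels the adversary knows. Writing $a' = a s_1 s_2$ and $b' = b s_1 s_2$, the hypothesis reads $\frac{a' + (C-1)b'}{C} < 1 - \alpha - \beta$.

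First I would check that Lemma~\ref{lemma:expected}, applied to $G_1 \cap G_2$ with $C = n^{\beta}$, still delivers what we need. Let $X_k$ count the vertices of community $k$ isolated in $G_1 \cap G_2$, and let $n' = n/C = n^{1-\beta}$. Nothing in the proof of Lemma~\ref{lemma:expected} uses that $C$ is bounded: the expectation estimate relies only on $(1-p')^{n'} \ge \exp(-n'p'/(1-p'))$, on $1 - q' \ge 1 - p'$, and on $p', q' = \Theta(\log n / n) \to 0$ (which is exactly what makes $n^{-\gamma p'/(1-p')} = 1 - o(1)$), and the variance estimate $Var[X_k] = E[X_k](1+o(1))$ carries over for the same reason together with $E[X_k], n' \to \infty$. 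Hence, for any constant $\gamma$ with $\frac{a'+(C-1)b'}{C} < \gamma$, the lemma yields $E[X_k] \ge \frac{n^{1-\gamma}}{C}(1-o(1)) = n^{1-\gamma-\beta}(1-o(1))$ together with $\mathbb{P}\bigl[X_k \le \tfrac{1}{2} n^{1-\gamma-\beta}\bigr] \to 0$. Choosing $\gamma$ strictly between $\frac{a'+(C-1)b'}{C}$ and $1-\alpha-\beta$, which is possible because the hypothesis is strict, we get $1 - \gamma - \beta > \alpha$, so $E[X_k] \to \infty$ and $\mathbb{P}\bigl[X_k \le \tfrac{1}{2} n^{\alpha}\bigr] \to 0$.

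Now I would finish exactly as in Theorem~\ref{thm:converse}. Fix a single community, say community $1$; with probability $1 - o(1)$ it contains a set $T$ of at least $n^{\alpha}/2$ vertices isolated in $G_1 \cap G_2$. Any permutation that permutes the elements of $T$ among themselves and fixes all other vertices is a community-preserving element of $\text{Aut}(G_1 \cap G_2)$, so by Lemma~\ref{permute} its posterior probability is at least that of the identity. As in Theorem~\ref{thm:converse}, this forces the MAP (hence any) estimator of the full permutation to be correct with probability at most $1/|T|!$, and the MAP estimate of the identity of any fixed vertex of $T$ to be correct with probability at most $1/|T| \le 2 n^{-\alpha}$; equivalently, the posterior retains at least $\log_2 |T| \ge \log_2(n^{\alpha}/2) = (1-o(1))\alpha \log_2 n$ bits of uncertainty about that vertex's identity. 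Since $T$ lies in one community, its $\ge n^{\alpha}/2$ vertices are mutually confusable, which is the claimed statement.

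The main obstacle is the bookkeeping in the first step: making sure that none of the $o(1)$ estimates inside Lemma~\ref{lemma:expected} secretly depend on $C$ being constant. They do not --- every vanishing term is controlled by $p', q' \to 0$, and intuitively shrinking the communities only makes isolation more likely while keeping the pairwise correlation $O(p') = o(1)$. The single place $C$ enters is the explicit factor $1/C = n^{-\beta}$ in $E[X_k]$, and that is precisely what shifts the threshold of Theorem~\ref{thm:converse} from $1-\alpha$ to $1-\alpha-\beta$. Unlike the constant-$C$ case, we do not pair this converse with a community-recovery guarantee, since exact-recovery thresholds for $SBM$ with polynomially many communities do not seem to be available; this is a pure deanonymization converse.
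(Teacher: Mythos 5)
Your proposal is correct and follows essentially the same route as the paper: apply Lemma~\ref{lemma:expected} to $G_1\cap G_2 \sim SBM(n,ps_1s_2,qs_1s_2)$ with $C=n^{\beta}$ to obtain $\Theta(n^{\alpha})$ isolated vertices within a single community, then reuse the automorphism argument of Lemma~\ref{permute} and Theorem~\ref{thm:converse}. In fact you are more careful than the paper's two-line proof, since you explicitly verify that the estimates in Lemma~\ref{lemma:expected} do not require $C$ to be constant and you get the exponent bookkeeping (applying the lemma at level $1-\alpha-\beta$ so that $X_k\geq n^{\alpha}/2$) right, where the paper's own text is loose.
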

\begin{proof}
Let $X_k$ be the number of isolated vertices in community $k$ of $G_1\cap G_2$.
By Lemma \ref{lemma:expected}, with probability $1-o(1)$, $X_k = \frac{n^{1-\alpha}}{2n^{\beta}} = \Omega(n^{1-\alpha-\beta})$.
The remainder of the proof is parallel to that of Theorem~\ref{thm:converse}.
\end{proof}

This theorem implies that to achieve the same level of uncertainty about identities of vertices as in the constant community case (i. e., $(1-o(1)) \alpha \log_2 n$ bits), a more conservative threshold is needed (Note the shift by $\beta$).

\begin{figure*}[th]
	\center{
		\begin{tabular}{@{}c@{}c@{}}
			\includegraphics[scale=0.25]{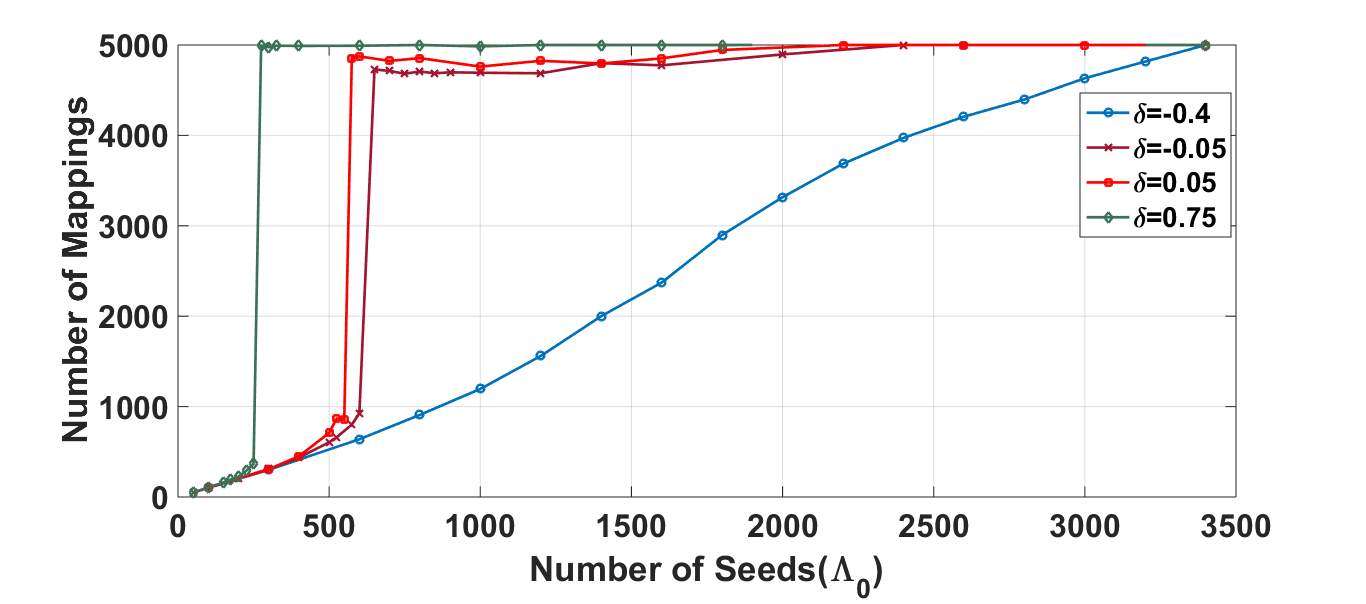}&\includegraphics[scale=0.25]{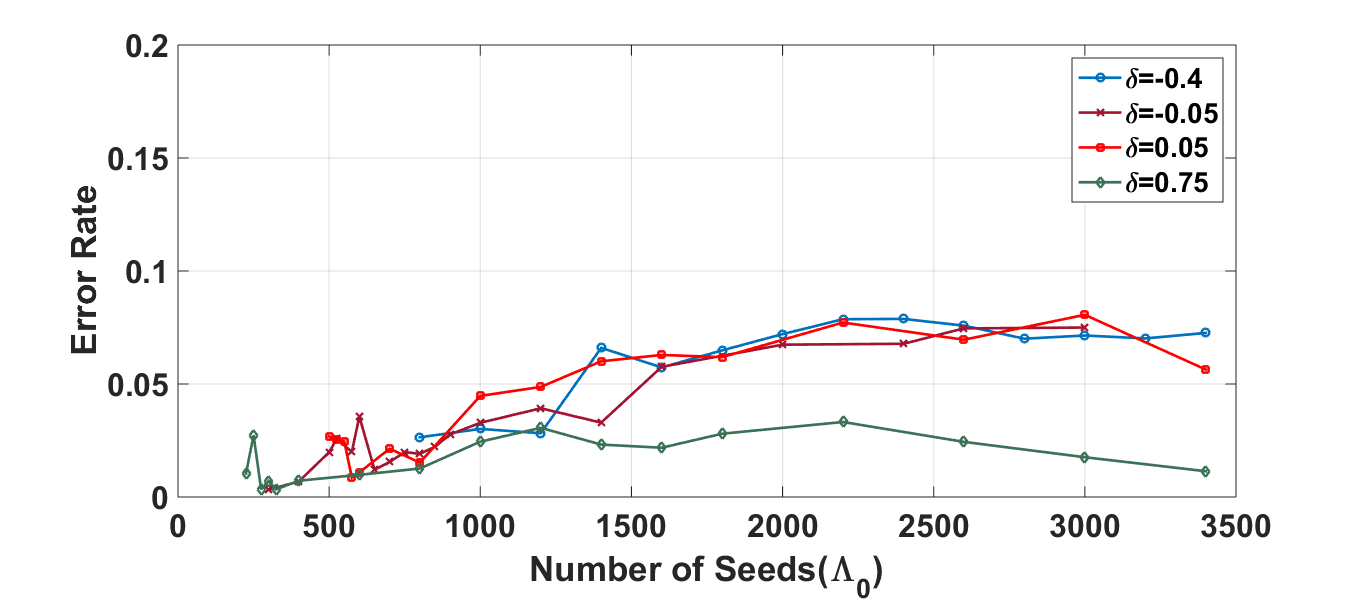}\\
			(a) & (b) \end{tabular}} \vspace{-10pt}\caption{{Plots for the Percolation method based Deanonymization Algorithm with $n=5000$ and $r=4$. (a) Total number of mapped nodes vs Number of seeds, and (b) Error Rate vs Number of Seeds
		}}\label{r4}
	\end{figure*}
		\begin{figure*}[th]
			\center{
				\begin{tabular}{@{}c@{}c@{}}
					\includegraphics[scale=0.25]{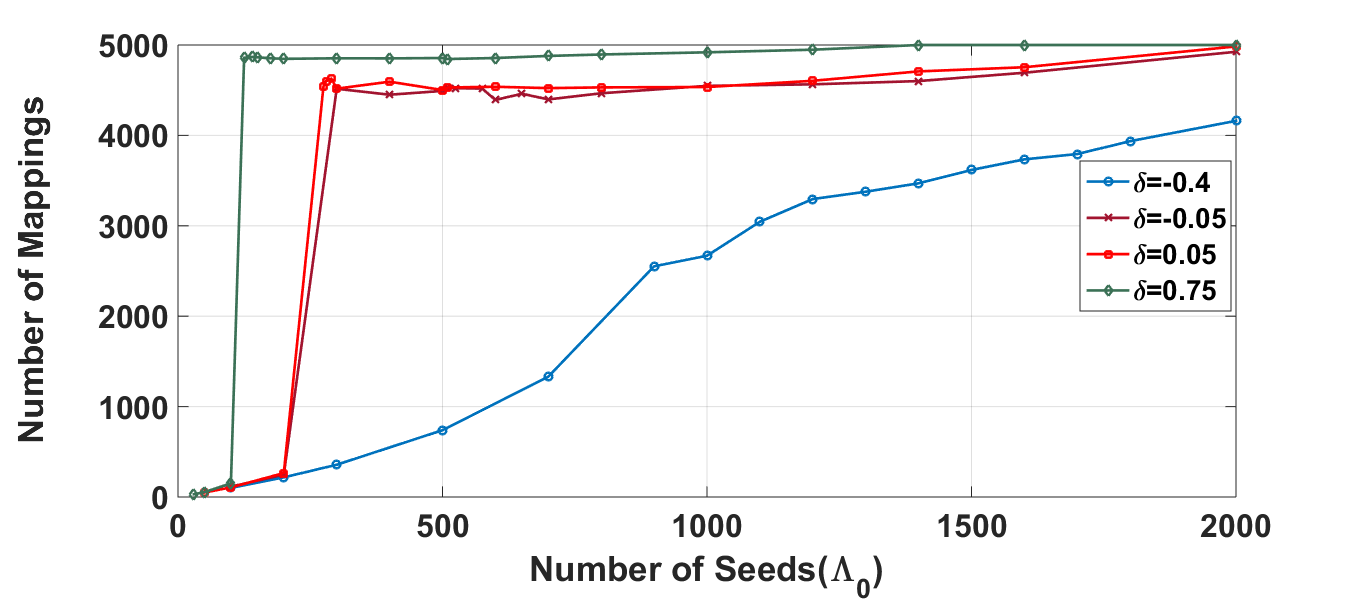}&\includegraphics[scale=0.25]{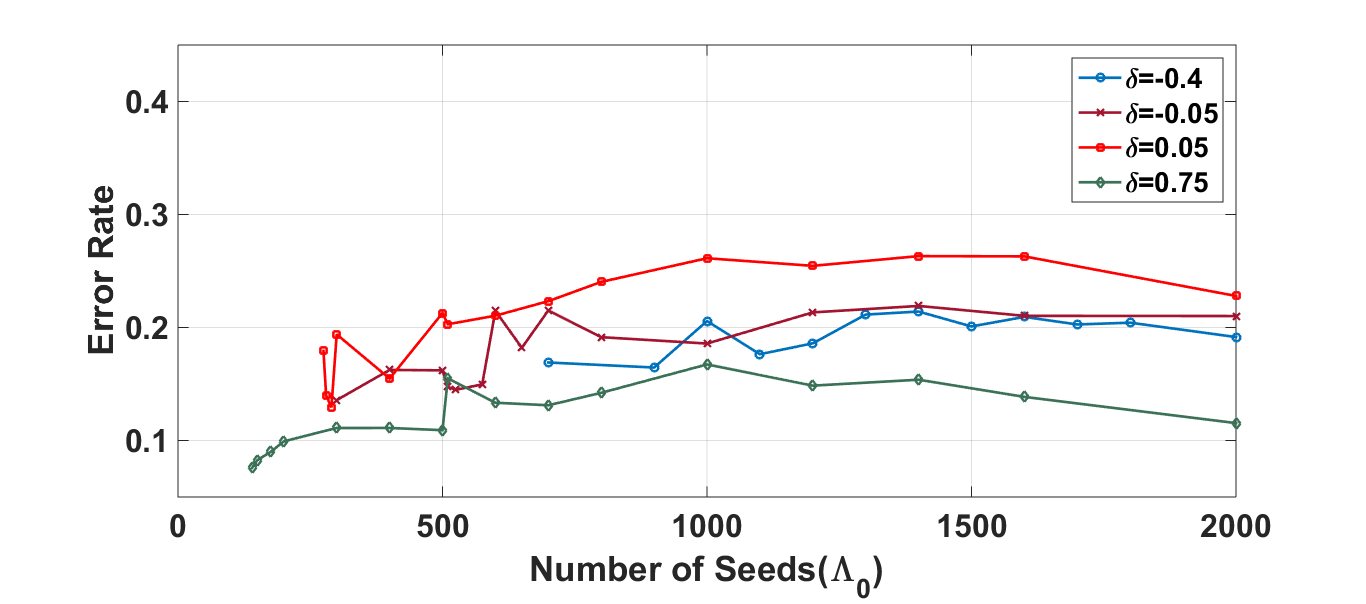}\\
					(a) & (b) \end{tabular}} \vspace{-10pt}\caption{{Plots for the Percolation method based Deanonymization Algorithm with $n=5000$ and $r=3$. (a) Total number of mapped nodes vs Number of seeds, and (b) Error Rate vs Number of Seeds
				}}\label{r3}
			\end{figure*}
\section{Experimental Results}\label{sec:eval}
In this section, we study the utility-privacy trade-off for both synthetic and real networks. The aim of this section is two-fold. First, we want to show that when the conditions of our converse are satisfied, most of the vertices in the network still remain anonymized. Second, we aim to demonstrate the existence of real networks which support community reconstruction without leaking the privacy of most of the users. 
\subsection{Results for SBM}\label{expsbm}
We consider the percolation based deanonymization algorithm proposed in \cite{yartseva2013performance}, when an adversary knows the community partition in the networks. The choice of this algorithm is motivated by the fact that its performance is guaranteed for random graphs. Other algorithms in the literature are heuristics based and their performance is highly dataset dependent. Because the structure of an SBM network is quite uniform, the structural properties used by the heuristic algorithms are present only in a very few locations.

The percolation algorithm starts with $\Lambda_0$ number of seed nodes, and incrementally maps the remaining pair of nodes, using a thresholding criteria controlled by parameter $r\geq2$. A large value of $r$ ensures a smaller deanonymization error but requires large number of seeds to percolate. Conversely, small values of $r$ make the percolation easier but increase the error rates. We analyze the performance of this algorithm on the networks drawn from the SBM family with two communities. We provide the algorithm with $\Lambda_0$ number of randomly selected seed nodes. In practice, the algorithm has to identify the seeds correctly, so our setting is helping the performance of the algorithms. To make use of the community structure, we only allow those mappings which match nodes belonging to the same community.

The underlying graph is drawn from the SBM distribution with two communities, that is, $G\sim SBM(n,a\frac{\log n}{n},b\frac{\log n}{n})$. Graphs $G_1$ and $G_2$ are generated using sampling probabilities $s_1$ and $s_2$ respectively. We also sub-sample the private graph $G_2$ with probability $t$. Define the \textit{offset} for the parameter space, measuring the distance of parameters from the threshold for deanonymization, by, $\delta=\frac{(a+b)s_1s_2t}{2}-1$. Note that, $\delta<0$ corresponds to the case when exact de-anonymization is impossible. We are interested in the performance of the algorithm with varying values of the $\delta$.

To generate the datasets, the parameters are fixed as $n=5000$, $a=20$ and $b=5$. We vary the values of the sampling probabilities $s_1$,$s_2$, and $t$ to tune the parameter $\delta$. The parameters are tuned such that the community structure is preserved perfectly, that is, parameters are in the regime where exact community recovery is possible using SDP. We first evaluate the algorithm for the thresholding parameter $r=4$, for which the best results were obtained. For each value of $r$, the results are compared for four values of the offset parameter, $\delta=\{-0.4, -0.05, 0.05, 0.75\}$. 

Figure \ref{r4}(a) shows the percolation behavior of the algorithm for $r=4$ and various values of $\delta$. For $\delta=-0.05, 0.05$, and $0.75$, the percolation process exhibits a phase transition. That is, after some critical value of $\Lambda_0$, the algorithm maps almost every node in $G_1$ to some node in $G_2$. But for $\delta=-0.4$, the algorithm percolates almost linearly in the number of initial seeds, i.e. it fails to identify many nodes beyond the randomly given seeds. Hence, in this case, the algorithm requires a large value of $\Lambda_0$ to map a significantly large number of users which is not reasonable or practical. Figure \ref{r4}(b) shows the error rates for this scenario. We define the error rate as the ratio of incorrectly mapped nodes to the total number of mapped nodes excluding the seeds. The error rates seem to converge to small values, which means that, when the algorithm managed to percolate, it deanonymized the users correctly. 

Given that the reason for the failure of the algorithm, at  $\delta=-0.4$, is not the errors in mapping the users, but rather not being able to percolate, we tested the performance for $r=2$ and $r=3$. For $r=2$ the percolation process undergoes a phase transition for all the values of $\delta$. In this case the error rates were quite high and smaller values of $\delta$ result in even higher error rates as compared to larger values. In particular, for  $\delta=-0.4$, the error rate was more than $0.9$ even for a large number of initial seeds, which means that although the algorithm percolates, it deanonymizes only a small fraction of users correctly.

Figure \ref{r3}(a) shows the percolation behavior of the algorithm for $r=3$ and various values of $\delta$.  Note that, in contrast to the case $r=2$, the percolation for $\delta=-0.4$ shows similar behavior to that when $r=4$. Hence, even in this case, at lower negative values of $\delta$, the algorithm requires a large number of seeds to percolate efficiently. Figure \ref{r3}(b) shows the error rates of the algorithm for this case. Although the error rates are not too high, achieving them still requires a large number of seeds, especially for negative values of $\delta$.

On the basis of our results, we argue that, although $\delta<0$ is the threshold for the {\em exact} deanonymizability, the percolation algorithm fails significantly if we go somewhat lower than $0$. This is despite the fact that a large number of seed nodes was handed out to the algorithm as opposed to being learnt. The algorithm percolates for $r=2$ but makes large number of errors and hence only deanonymizes a small fraction of users. If $\delta$ decreases further, the error rate is expected to increase even more. For $r=4$, although the error rate is small, achieving this requires a large number of seeds, if $\delta$ is near or below zero, which imposes a limitation on the applicability of the algorithm in practice.

\subsection{Results for Real Network}
We consider a real world dataset and study its utility and privacy trade-off by varying the subsampling parameter $t$. We consider Facebook network \cite{snapnets}, containing $4039$ users and $88234$ edges. The average clustering coefficient is $0.6055$ and fraction of closed triangles is $0.2647$ which suggests strong community structure. We expect the community structure of the dataset to be resistant to edge perturbations. We describe the experimental methodology and results in the following subsections.
\subsubsection{Methodology} Our first aim is to study the effects of edge subsampling on the community structure. The original dataset is subsampled using the subsampling parameter $t=0.5,0.6,0.7,0.8$, and $0.9$. As there is no ground truth community labels for the network, we use the communities detected in the original network as our ground truth. To detect the communities, we use the freely available software Pajek \cite{batagelj1998pajek}, utilizing the Louvain modularity maximization method. We aim to measure the change in community structure as a function of $t$. We define the following parameters.
\begin{itemize}
	\vspace{-3pt}
	\item \textbf{Number of Communities}: A community is considered a true community only if it has at least $4$ vertices.
	\vspace{-13pt}
	\item \textbf{$\mathbf{(1-\boldsymbol{\epsilon})}$-Preservation}: We find the best match among the communities of the two networks using the Jaccard index, $J(A,B)=\frac{|A\cap B|}{|A\cup B|}$. Note that, higher the index, better the community is preserved. Once the best match has been found for all the communities, we define the $(1-\epsilon)$-Preservation as the number of communities with Jaccard index at least $(1-\epsilon)$. We consider $\epsilon \in \{0.1,0.15\}$, which ensures that the communities are preserved extremely well.
\end{itemize} 
Our second aim is to study the effect of sub-sampling on the anonymity of the dataset. For this purpose, we generate the auxiliary network by rewiring $30$ percent of the edges of the original network. This choice models an adversary with access to an auxiliary network which is highly correlated with the anonymized one. We then study the deanonymization results for percolation algorithm \cite{yartseva2013performance} for varying values of sub-sampling parameter $t$.
\subsubsection{Results} Table \ref{tablefb1} shows the results for the Facebook dataset. 
The number of communities is well preserved, the maximum cha-nge being $2$ for $t=0.6$. The size of the smallest community is preserved perfectly till $t=0.7$  whereas size of the largest community is always well preserved. This indicates that small communities tend to break into even smaller ones if we subsample too much. The most interesting part of the results is the $(1-\epsilon)$-Preservation. Subsampling upto $t=0.7$ preserves most of the communities to more than $90 \%$ of the members. Thinking less conservatively, even going upto $t=0.5$ preserves most of the communities to more than $85\% $ members. These results indicate that most of the community structure is well preserved even if we subsample to half the number of edges. Table \ref{tablefb2} shows the Jaccard indices for the five largest communities in the network. Note that the minimum size of a community in this scenario was $346$, and corresponds to the smallest community for $t=0.6$. As is evident from the table, most of these communities are preserved to over $95 \%$. The results seem to be an outcome of the already strong community structure in the original network. The results are motivating in the sense that preservation of community structure after edge perturbation depends on the strength of the communities in the original network.

\begin{table}[t]
	\caption{Number of communities, size of smallest and largest community and number of well preserved communities for varying values of sub-sampling parameter $t$.}\label{tablefb1}\vspace{-17pt}
\begin{center}
		\resizebox{\columnwidth}{!}{
	\begin{tabular}{ | c | c | c | c | c | c | c|}
		\hline
		\textbf{t} & \textbf{1} & \textbf{0.9} & \textbf{0.8} & \textbf{0.7} & \textbf{0.6} & \textbf{0.5} \\ \hline
		\textbf{No. of Communities} & 16 	& 16 	& 17 	& 17 	& 18 	& 17\\ \hline
		\textbf{Minimum Size} 		& 19 	& 19 	& 19 	& 19 	& 8 	& 6\\ \hline	
		\textbf{Maximum Size} 		& 548 	& 548 	& 548 	& 547 	& 547 	& 546\\		\hline
		\textbf{0.9-Preservation} 	& 16 	& 15 	& 13 	& 13 	& 11 	& 11\\		\hline
		\textbf{0.85-Preservation} 	& 16 	& 16 	& 13 	& 15 	& 13 	& 14\\		\hline
	\end{tabular}
}
\end{center}
\end{table}
\begin{table}[t]
			\caption{Jaccard index of the $5$ largest communities for varying values of sub-sampling parameter $t$.}\label{tablefb2}\vspace{-17pt}
	\begin{center}
		\begin{tabular}{ | c | c | c | c | c | c|}
			\hline
			\textbf{t$\to$} & \textbf{0.9} & \textbf{0.8} & \textbf{0.7} & \textbf{0.6} & \textbf{0.5} \\ \hline
			\textbf{$1^{st}$}  & 1 		& 1 		& 0.9982 & 0.9982 & 0.9964\\ \hline
			\textbf{$2^{nd}$}  & 1 		& 0.9816 	& 0.9834 & 0.9634 & 0.9757\\ \hline	
			\textbf{$3^{rd}$}  & 0.9794 & 0.9861 	& 0.8859 & 0.8812 & 0.8977\\ \hline
			\textbf{$4^{th}$}  & 0.9128 & 0.9777 	& 0.9596 & 0.9703 & 0.9477\\ \hline
			\textbf{$5^{th}$}  & 1 		& 0.9781 	& 0.9953 & 0.9802 & 0.9636\\ \hline
		\end{tabular}
	\end{center}
\end{table}

\begin{figure*}
	\center{
		\begin{tabular}{@{}c@{}c@{}}
			\includegraphics[scale=0.16]{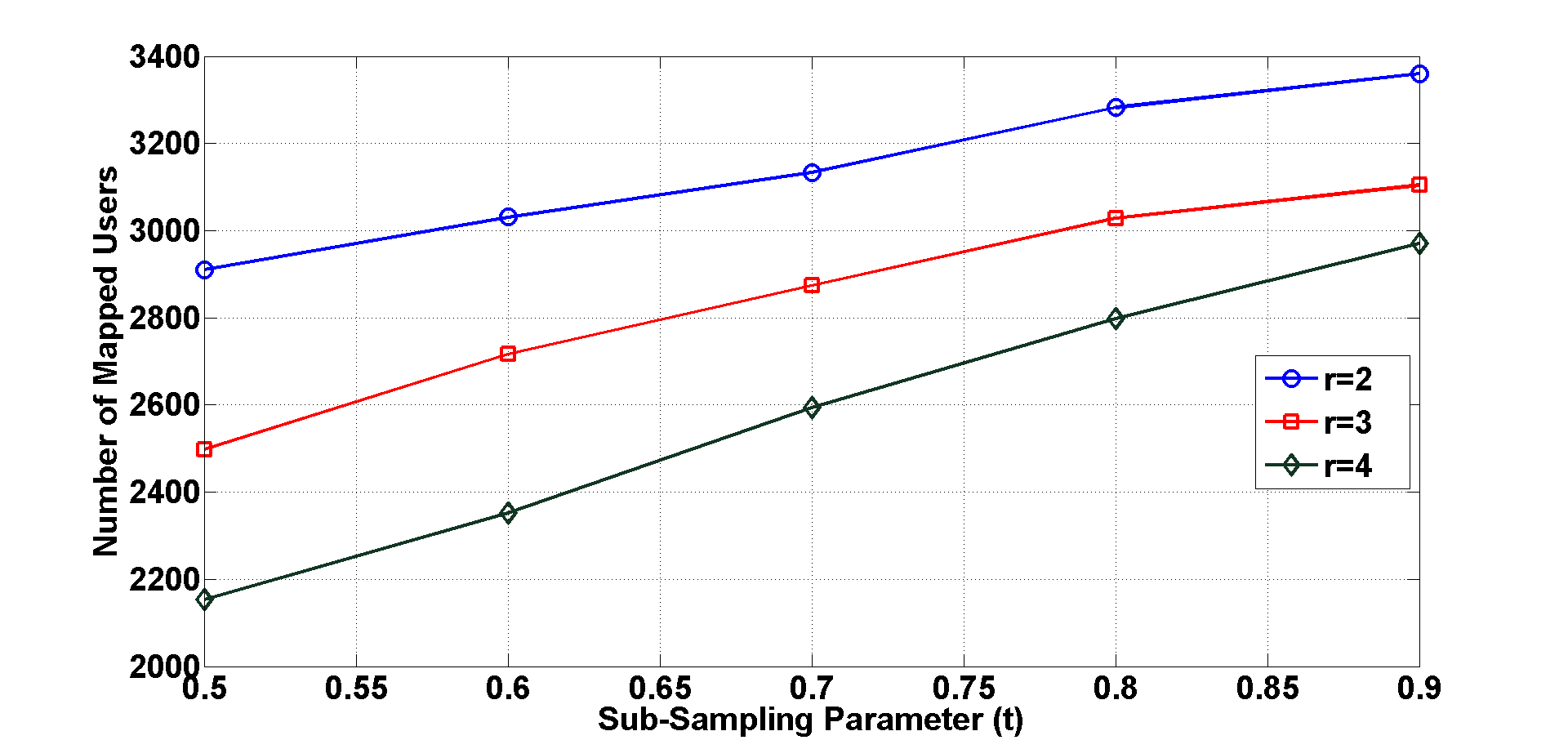}&\includegraphics[scale=0.16]{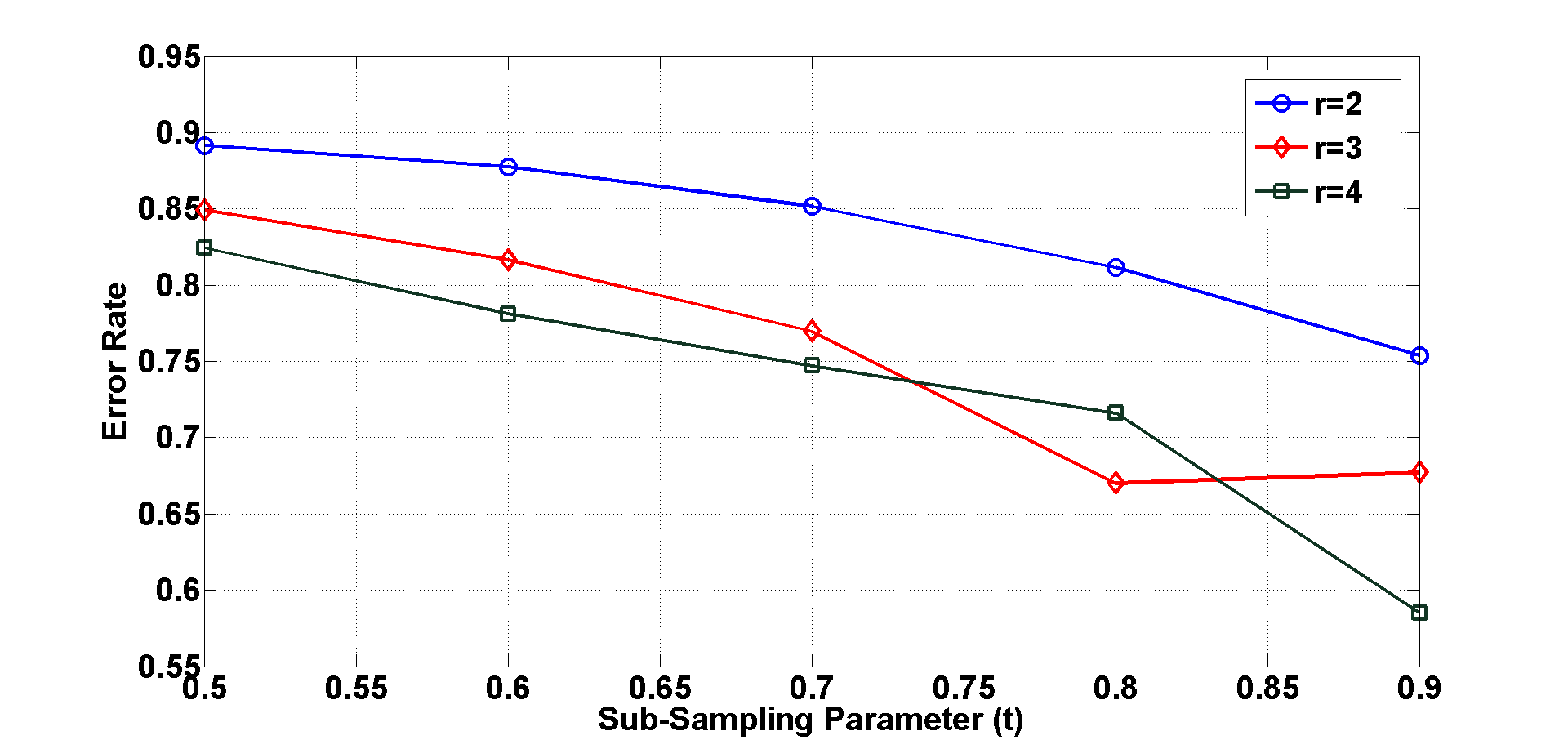}\\
			(a) & (b) \end{tabular}}\vspace{-11pt} \caption{{Deanonymization results using Percolation algorithm on for various values of percolation threshold $r$. (a) Total number of mapped users vs Subsampling parameter $t$, and (b) Error rate vs Subsampling parameter $t$
		}}\label{d1per}
	\end{figure*}
Figure \ref{d1per} shows the deanonymization results using the percolation algorithm with threshold $r=2,3,4$. We used $500$ number of random seeds. This selection was made keeping in mind that the algorithm should percolate while the number of seeds is practical as well. Also, as seen in Figure \ref{r4}, when near the threshold, the algorithm required around $500$ seeds to percolate. As evident in Figure \ref{d1per}(a), the number of mapped users increases with sub-sampling parameter $t$ for every value of $r$. A decreasing pattern is evident in Figure \ref{d1per}(b) for the error rate. The definition of error rate is the same as in subsection \ref{expsbm}. Note that for $t\leq 0.7$, the error rate is well above $75\%$ for all values of $r$. Hence subsampling this dataset to around $t=0.7$ preserves the anonymity of most of the users while still preserving most of the community structure. The best results seem to be obtained with $r=3$ as more users are mapped, compared to $r=4$ and the error rates seem to be similar. Even this choice maps around $35\%$ users when $t=0.9$. 

The results obtained indicate that the community structure is well preserved in the Facebook network at least upto $t=0.7$. Depending on the application, even going as low as $t=0.5$ preserves the community structure to a good extent. The deanonymization results also indicate that $t\leq 0.7$ ensures that most of the users remain anonymized. These results show that, depending upon the dataset, it is possible to preserve most of the community structure after edge perturbations while preserving the privacy. Most of the studies until now have missed this point. These results call for more dataset oriented research into the utility-privacy trade off.

\section{Conclusion}\label{sec:conclusion}
In this paper, we considered the problem of network de-anonymizability and established an information theoretic converse for the exact deanonymizability. This result applies to any deanonymization algorithm and hence provides a fundamental limit for this statistical estimation problem. This is qualitatively different from existing work in this area. We also improve the state of the art in achievability conditions, where significant effort has already been spent designing both efficient algorithms and information theoretically optimal methods. In particular, our converse and achievability bounds have the same parameter dependence. For exact deanonymization, the bounds match up to a factor of $2$. Our work supports the idea that the intersection graph of the auxiliary and sensitive networks plays a fundamental role in controlling the feasibility of deanonymization. This adds to existing evidence from \cite{yartseva2013performance}, where this intersection plays a crucial role in the analysis of percolation algorithm.

An important consequence of our result is that it is sometimes possible to prevent deanonymization while preserving other important structural information contained in the sensitive graph, particularly the community structure. The amount of ground truth information available to the public plays an important role in this trade-off.
 
The converse only establishes that a subset of the vertices cannot be deanonymized.
However, we make similar strong requirements in the community reconstruction problem: we require the community label of every vertex to be learnable.
The existance of a safe region under these very strict definitions of deanonymization and community recovery suggests that one might also exist if the definitions are simultaneously relaxed.
Additionally, simulations illustrate that deanonymization algorithms tend to fail drastically when correlation and edge density become too low. The failure conditions for these algorithms are not identical to the conditions of our converse, but they are closely related. 
Consequently we believe that it is possible to rigorously establish stronger impossibility results for deanonymization.

\bibliographystyle{abbrv}
\bibliography{ref}

\begin{thebibliography}{10}

\bibitem{abbe2014exact}
E.~Abbe, A.~S. Bandeira, and G.~Hall.
\newblock Exact recovery in the stochastic block model.
\newblock {\em arXiv preprint arXiv:1405.3267}, 2014.

\bibitem{batagelj1998pajek}
V.~Batagelj and A.~Mrvar.
\newblock Pajek-program for large network analysis.
\newblock {\em Connections}, 21(2):47--57, 1998.

\bibitem{brickell2008cost}
J.~Brickell and V.~Shmatikov.
\newblock The cost of privacy: Destruction of data-mining utility in anonymized
  data publishing.
\newblock In {\em Proceedings of the 14th ACM SIGKDD International Conference
  on Knowledge discovery and data mining}, pages 70--78. ACM, 2008.

\bibitem{chen2006detecting}
J.~Chen and B.~Yuan.
\newblock Detecting functional modules in the yeast protein--protein
  interaction network.
\newblock {\em Bioinformatics}, 22(18):2283--2290, 2006.

\bibitem{coja2010graph}
A.~Coja-Oghlan.
\newblock Graph partitioning via adaptive spectral techniques.
\newblock {\em Combinatorics, Probability and Computing}, 19(02):227--284,
  2010.

\bibitem{cullina2016improved}
D.~Cullina and N.~Kiyavash.
\newblock Improved achievability and converse bounds for {E}rdos {R}enyi graph
  matching.
\newblock {\em arXiv preprint arXiv:1602.01042}, 2016.

\bibitem{decelle2011asymptotic}
A.~Decelle, F.~Krzakala, C.~Moore, and L.~Zdeborov{\'a}.
\newblock Asymptotic analysis of the stochastic block model for modular
  networks and its algorithmic applications.
\newblock {\em Physical Review E}, 84(6):066106, 2011.

\bibitem{dwork2011differential}
C.~Dwork.
\newblock Differential privacy.
\newblock In {\em Encyclopedia of Cryptography and Security}, pages 338--340.
  Springer, 2011.

\bibitem{girvan2002community}
M.~Girvan and M.~E. Newman.
\newblock Community structure in social and biological networks.
\newblock {\em Proceedings of the national academy of sciences},
  99(12):7821--7826, 2002.

\bibitem{hajek2014achieving}
B.~Hajek, Y.~Wu, and J.~Xu.
\newblock Achieving exact cluster recovery threshold via semidefinite
  programming.
\newblock {\em arXiv preprint arXiv:1412.6156}, 2014.

\bibitem{hajek2015achieving}
B.~Hajek, Y.~Wu, and J.~Xu.
\newblock Achieving exact cluster recovery threshold via semidefinite
  programming: Extensions.
\newblock {\em arXiv preprint arXiv:1502.07738}, 2015.

\bibitem{hay2008resisting}
M.~Hay, G.~Miklau, D.~Jensen, D.~Towsley, and P.~Weis.
\newblock Resisting structural re-identification in anonymized social networks.
\newblock {\em Proceedings of the VLDB Endowment}, 1(1):102--114, 2008.

\bibitem{ji2015your}
S.~Ji, W.~Li, N.~Z. Gong, P.~Mittal, and R.~Beyah.
\newblock On your social network de-anonymizablity: Quantification and large
  scale evaluation with seed knowledge.
\newblock 2015.

\bibitem{ji2014structural}
S.~Ji, W.~Li, M.~Srivatsa, and R.~Beyah.
\newblock Structural data de-anonymization: Quantification, practice, and
  implications.
\newblock In {\em Proceedings of the 2014 ACM SIGSAC Conference on Computer and
  Communications Security}, pages 1040--1053. ACM, 2014.

\bibitem{jones2010feasibility}
S.~Jones and E.~O'Neill.
\newblock Feasibility of structural network clustering for group-based privacy
  control in social networks.
\newblock In {\em Proceedings of the Sixth Symposium on Usable Privacy and
  Security}, page~9. ACM, 2010.

\bibitem{snapnets}
J.~Leskovec and A.~Krevl.
\newblock {SNAP Datasets}: {Stanford} large network dataset collection.
\newblock \url{http://snap.stanford.edu/data}, June 2014.

\bibitem{li2013membership}
N.~Li, W.~Qardaji, D.~Su, Y.~Wu, and W.~Yang.
\newblock Membership privacy: a unifying framework for privacy definitions.
\newblock In {\em Proceedings of the 2013 ACM SIGSAC conference on Computer \&
  communications security}, pages 889--900. ACM, 2013.

\bibitem{li2009tradeoff}
T.~Li and N.~Li.
\newblock On the tradeoff between privacy and utility in data publishing.
\newblock In {\em Proceedings of the 15th ACM SIGKDD International Conference
  on Knowledge discovery and data mining}, pages 517--526. ACM, 2009.

\bibitem{liu2008towards}
K.~Liu and E.~Terzi.
\newblock Towards identity anonymization on graphs.
\newblock In {\em Proceedings of the 2008 ACM SIGMOD international conference
  on Management of data}, pages 93--106. ACM, 2008.

\bibitem{massoulie2014community}
L.~Massouli{\'e}.
\newblock Community detection thresholds and the weak ramanujan property.
\newblock In {\em Proceedings of the 46th Annual ACM Symposium on Theory of
  Computing}, pages 694--703. ACM, 2014.

\bibitem{mcpherson2001birds}
M.~McPherson, L.~Smith-Lovin, and J.~M. Cook.
\newblock Birds of a feather: Homophily in social networks.
\newblock {\em Annual review of sociology}, pages 415--444, 2001.

\bibitem{mossel2014consistency}
E.~Mossel, J.~Neeman, and A.~Sly.
\newblock Consistency thresholds for binary symmetric block models.
\newblock {\em arXiv preprint arXiv:1407.1591}, 2014.

\bibitem{narayanan2009anonymizing}
A.~Narayanan and V.~Shmatikov.
\newblock De-anonymizing social networks.
\newblock In {\em Security and Privacy, 2009 30th IEEE Symposium on}, pages
  173--187. IEEE, 2009.

\bibitem{newman2010networks}
M.~Newman.
\newblock {\em Networks: An Introduction}.
\newblock Oxford University Press, 2010.

\bibitem{nilizadeh2014community}
S.~Nilizadeh, A.~Kapadia, and Y.-Y. Ahn.
\newblock Community-enhanced de-anonymization of online social networks.
\newblock In {\em Proceedings of the 2014 ACM SIGSAC Conference on Computer and
  Communications Security}, pages 537--548. ACM, 2014.

\bibitem{pedarsani2011privacy}
P.~Pedarsani and M.~Grossglauser.
\newblock On the privacy of anonymized networks.
\newblock In {\em Proceedings of the 17th ACM SIGKDD international conference
  on Knowledge discovery and data mining}, pages 1235--1243. ACM, 2011.

\bibitem{weng2014predicting}
L.~Weng, F.~Menczer, and Y.-Y. Ahn.
\newblock Predicting successful memes using network and community structure.
\newblock {\em arXiv preprint arXiv:1403.6199}, 2014.

\bibitem{yartseva2013performance}
L.~Yartseva and M.~Grossglauser.
\newblock On the performance of percolation graph matching.
\newblock In {\em Proceedings of the first ACM conference on Online social
  networks}, pages 119--130. ACM, 2013.

\bibitem{yoon2015community}
S.-H. Yoon, K.-N. Kim, J.~Hong, S.-W. Kim, and S.~Park.
\newblock A community-based sampling method using dpl for online social
  networks.
\newblock {\em Information Sciences}, 306:53--69, 2015.

\end{thebibliography}

\flushend

\end{document}